\documentclass[journal]{IEEEtran}
\usepackage{amsmath,amssymb,mathtools,bm}
\usepackage{graphicx}
\usepackage{booktabs}
\usepackage{multirow}
\usepackage{cite}
\usepackage{float}
\usepackage{enumitem}
\usepackage{stfloats}
\usepackage[T1]{fontenc}
\makeatletter
\def\abstract{\normalfont
  \if@twocolumn
    \@IEEEabskeysecsize\bfseries\textit{\abstractname:}\ \relax
  \else
    \bgroup\par\addvspace{0.5\baselineskip}\centering
    \vspace{-1.78ex}\@IEEEabskeysecsize\textbf{\abstractname}
    \par\addvspace{0.5\baselineskip}\egroup\quotation\@IEEEabskeysecsize
  \fi\@IEEEgobbleleadPARNLSP}
\def\IEEEkeywords{\normalfont
  \if@twocolumn
    \@IEEEabskeysecsize\bfseries\textit{\IEEEkeywordsname:}\ \relax
  \else
    \bgroup\par\addvspace{0.5\baselineskip}\centering
    \@IEEEabskeysecsize\textbf{\IEEEkeywordsname}
    \par\addvspace{0.5\baselineskip}\egroup\quotation\@IEEEabskeysecsize
  \fi\@IEEEgobbleleadPARNLSP}
\makeatother

\newfloat{algorithm}{tbp}{loa}
\floatname{algorithm}{Algorithm}
\newcounter{algline}
\newcommand{\resetalg}{\setcounter{algline}{0}}
\newcommand{\AL}[1]{\stepcounter{algline}\makebox[1.6em][r]{{\scriptsize\arabic{algline}:}}\ #1\par}

\newtheorem{theorem}{Theorem}

\newtheorem{proposition}{Proposition}
\newtheorem{corollary}{Corollary}
\newtheorem{definition}{Definition}

\newtheorem{remark}{Remark}

\newcommand{\E}{\mathbb{E}}
\newcommand{\Prob}{\mathbb{P}}
\newcommand{\R}{\mathbb{R}}
\newcommand{\KL}{\mathrm{KL}}
\newcommand{\dX}{d_{\mathcal{X}}}
\newcommand{\suff}{\mathrm{succ}}
\newcommand{\Afilt}{\mathcal{A}_{\mathrm{filt}}}

\newcommand{\esssup}{\mathop{\mathrm{ess\,sup}}}

\begin{document}

\title{Learning Manipulation-Sufficient Representations via Outcome
Bottlenecks}

\author{Md Selim Sarowar and Sungho Kim$^{*}$
\thanks{Department of Electronics Engineering, Yeungnam University,
Gyeongsan 38541, Republic of Korea. $^{*}$Corresponding author: Sungho Kim
(e-mail: sunghokim@yu.ac.kr).}%
}

\markboth{}%
{Sarowar \MakeLowercase{\textit{et al.}}: Learning Manipulation-Sufficient Representations}

\maketitle

\begin{abstract}
Networked manipulation endpoints couple perception to actuation across
compute- and bandwidth-limited links, yet commonly exchange dense geometric
states optimized for fidelity rather than action outcomes. A stochastic
representation is learned with a policy-free, action-conditioned outcome
bottleneck: marginal outcome log-loss supplies distortion and a KL term
regularizes rate. The construction is motivated by the minimal statistic that
preserves the outcome distribution of every admissible action, while the
implemented finite model is evaluated as a rate-regularized mixture predictor.
The same encoder and outcome head support grasp selection, singleton conformal
filtering, active viewpoint selection, and latent test-time adaptation. A
finite-probe theorem identifies the local level-set tangent space with the
null space of an outcome Jacobian. The synthetic oracle verifies this result;
on scanned objects, an analytic surrogate agrees with measured simulator
invariances within $0.56^{\circ}$. Across 11{,}979 simulated grasps on 13
objects, a reconstructed-geometry wrench score attains 0.542 AUC against lift
success and falls below chance on curved objects, while our representation
attains 0.876. At 25\% commitment, executed-grasp success is 0.503 versus
0.984. The 512-byte interface is $288\times$ smaller than one RGB-D frame and
runs at 16\,ms per CPU decision. Independent synthetic points track the tested
conformal levels; scanned-object all-pair coverage is reported as a clustered
empirical diagnostic. On unseen objects, within-scene AUC falls to 0.569, and
a full-feedback update raises empirical mean pairwise coverage from 0.728 to
0.883.
\end{abstract}

\begin{IEEEkeywords}
Edge intelligence, information bottleneck, robotic grasping, conformal
prediction, active perception, Internet of Things, semantic communication.
\end{IEEEkeywords}

\IEEEpeerreviewmaketitle

\section{Introduction}

\IEEEPARstart{R}{obotic} manipulators are becoming Internet of Things
endpoints. Warehouse picking cells, mobile manipulators on factory floors,
and service robots in shared spaces increasingly run their perception either
on embedded processors with hard compute ceilings or offloaded across a
wireless link to an edge server with hard bandwidth ceilings~\cite{shi2016edge,mao2017survey,kehoe2015survey}. In both regimes the
systemic bottleneck is the same object: the message that perception hands to
decision making. Current grasping pipelines make that message a dense
geometric state. A reconstruct-then-plan system transmits or stores a 6D pose
together with a completed mesh or truncated signed distance field~\cite{wen2024foundationpose,wang2019nocs}; a feature-field
system attaches high-dimensional descriptors to every point of a radiance or
Gaussian representation~\cite{shen2023f3rm,rashid2023lerftogo}; an
end-to-end regressor consumes the full point cloud on every invocation~\cite{fang2020graspnet,fang2023anygrasp,sundermeyer2021contact}. A single
$96\times 96$ RGB-D frame already carries 36{,}864 scalars; the multi-view
sets these pipelines prefer carry an order of magnitude more, and meshes and
fields more still.

The cost would be acceptable if fidelity of the transmitted state predicted
the success of the action executed on it. It does not. On 13 scanned objects
(Section~\ref{sec:results}) the Ferrari--Canny wrench-space quality~\cite{ferrari1992planning}, computed on an oriented bounding box of the kind
a pose-and-shape pipeline recovers, scores 0.542 AUC against a rigid-body lift
of the true mesh, barely above chance, and falls \emph{below} chance to
0.473 on curved objects: the grasps it prefers are the grasps that fail. The
failure diagnoses this evaluated reconstruction--metric pair: the
reconstruction is self-consistent but wrong at contact locations, so
its analytic score inherits the error~\cite{roa2015grasp,newbury2023deep}.

Perception for manipulation is formulated as estimation of a
\emph{manipulation-sufficient statistic}: the minimal code
$z$ such that the conditional outcome distribution $p(y\,|\,o,a)$ of every
admissible action $a$ is preserved when the observation $o$ is replaced by
$z$. The definition induces an equivalence on world states, two states being
indistinguishable when every action yields the same outcome distribution, and
perception becomes estimation of a stochastic representation of the quotient. Three
consequences follow. First, an explicit KL rate surrogate encourages a compact
representation and is measured in
nats, which makes it a compact interface for a bandwidth-limited
perception-to-decision link and aligns the formulation with goal-oriented
communication~\cite{strinati2021,gunduz2023beyond}. Second,
the stochastic encoder exposes predictive spread, so pairwise conformal
filtering under exchangeability, active acquisition of the next viewpoint, and
assimilation of executed-probe outcomes all become inference procedures over
the same two learned modules rather than additional subsystems. Third, the
formulation admits a finite-probe local theory: the null space of an outcome
Jacobian identifies infinitesimal state directions invisible to the chosen
probe family, and the diagnostic is verified numerically.

The contributions are:
\begin{enumerate}[leftmargin=1.4em]
\item \textbf{A policy-free, action-conditioned outcome bottleneck}
(Section~\ref{sec:objective}) whose distortion is the negative log-likelihood
of task outcomes over the admissible action family rather than a
reconstruction, pose, or reward term. An idealized form of the objective recovers the minimal sufficient partition
of the observation (Theorem~\ref{thm:ib}); the trained system optimizes the
rate-regularized marginal log-loss of a mixture predictor, which
Remark~\ref{rem:idealization} distinguishes from that ideal; Proposition~\ref{prop:marginal} identifies the
marginal-likelihood estimator (Proposition~\ref{prop:marginal}) without which
the objective collapses, together with a per-outcome-dimension budget that
task-conditions the representation.
\item \textbf{An identifiability theory for outcome-defined perception}
(Section~\ref{sec:identifiability}). For a fixed finite probe family, the
local level set through $x$ is a submanifold whose tangent space is the null
space of the finite-probe outcome Jacobian $J_N(x)$
(Theorem~\ref{thm:identifiability}); pose readouts are locally constrained
only on $\mathrm{row}\,J_N(x)$, and full-state metrics can penalize directions
that the probe family cannot distinguish (Corollary~\ref{cor:metrics}). The predicted null
spaces of an analytic surrogate match independently measured simulator
outcome-invariant subspaces to within $0.56^{\circ}$ of principal angle on
the rank-deficient scanned objects; the theorem itself is verified on the
synthetic oracle where the differentiated map is the true kernel.
\item \textbf{A shared outcome model for sensing and latent adaptation}
(Section~\ref{sec:active}): viewpoint selection maximizes the expected
reduction of an outcome-space ambiguity $U(o)$, while probe assimilation uses
the same learned head in a distinct constrained variational update whose
unrestricted fixed point is characterized in Proposition~\ref{prop:tta}.
\item \textbf{A fail-closed pairwise conformal filter}
(Section~\ref{sec:conformal}): for exchangeable scene--action units, a
split-conformal construction over the mixture success probability retains an
action only when its prediction set is the singleton $\{1\}$ and abstains
otherwise. That theorem is separated from the clustered all-pair diagnostic on
the scanned corpus and evaluate an anchored full-feedback level update under
simulated drift without claiming selective-feedback validity.
\end{enumerate}

Everything is validated in a two-tier physics stack in which the
\emph{analytic} tier plans on reconstructed geometry and the
\emph{simulation} tier executes on the true convex decomposition, so the gap
between reconstruction and reality is measurable inside simulation with no
robot in the loop (Section~\ref{sec:setup}). The proposed system, labeled
\emph{Ours} in every table, is compared against the analytic wrench-space
pipeline, random and majority-class selection, and a privileged-state
baseline fit on the 14-vector~\eqref{eq:state}. The scope is limited to data
generated by the simulator/analytic stack; the representation
distribution is a diagonal Gaussian, the full model
is trained with three seeds and each ablation and leave-one-object-out
configuration with one; Section~\ref{sec:limitations} enumerates what these
choices do and do not license.

\section{Related Work}
\label{sec:related}

\subsection{Reconstruct-Then-Plan Pipelines}
The dominant architecture estimates a geometric state and hands it to a
planner: pose estimators~\cite{xiang2018posecnn,wang2019nocs,wen2024foundationpose}
regress an SE(3) transform, completion densifies occluded surfaces, and grasp
planners evaluate force closure or the Ferrari--Canny radius on the recovered
geometry~\cite{ferrari1992planning,roa2015grasp}. It inherits two structural
limitations: the estimand is ill-posed, a canonical frame not existing for
symmetric or partial objects, and geometry-wide metrics such as ADD-S and
Chamfer~\cite{hodan2018bop} need not weight errors according to their effect
on the evaluated grasp outcomes. Theorem~\ref{thm:identifiability} gives a
finite-probe local diagnostic for the first issue, and
Section~\ref{sec:results} measures the second for our baseline:
analytic quality on reconstructed geometry is uninformative about realized
lifts, worse than chance where the reconstruction is wrong.

\subsection{End-to-End Grasp Synthesis}
Learned grasp regressors map observations to grasp poses and scalar scores~\cite{mahler2017dexnet,fang2020graspnet,sundermeyer2021contact,%
fang2023anygrasp,newbury2023deep}. They are implicitly outcome supervised but
their perceptual state is typically not exposed as a stochastic interface for
joint selection, conformal filtering, viewpoint valuation, and probe
assimilation. Their reported scalar scores are not, by default, a calibration
guarantee for the outcome and action family studied here. The proposed method
retains outcome supervision while exposing a compact, transmissible stochastic representation
from which those four procedures derive.

\subsection{Task-Oriented Representations and Information Bottlenecks}
The information bottleneck~\cite{tishby1999,alemi2017} formalizes
compression subject to relevance and has been applied to control, most
directly by Pacelli and Majumdar~\cite{pacelli2020}, who compress state
toward the reward of a single fixed policy. Feature fields and affordance
methods~\cite{shen2023f3rm,rashid2023lerftogo,huang2024rekep} attach
task-relevant descriptors to geometry but keep geometry as the substrate,
and vision-language-action models~\cite{kim2024openvla} absorb perception
into a monolithic policy. Our estimand differs in kind from all three
lines: sufficiency is defined with respect to the \emph{entire
action-conditioned outcome family} $\{M(\cdot\,|\,x,a)\}_{a}$, policy-free
and embodiment-parameterized, the object of estimation is a quotient of the
state space with an explicit identifiability limit
(Theorem~\ref{thm:identifiability}), and the output supports a conformal
set-valued decision rather than only a feature vector. The distinction is the
combination of an action-family estimand, a finite-probe local diagnostic, and
a pairwise conformal construction, rather than a claim that the cited systems
cannot support uncertainty handling in other forms.

\subsection{Calibrated Uncertainty and Abstention in Robotics}
Conformal prediction~\cite{vovk2005,angelopoulos2023} supplies
finite-sample, distribution-free coverage and has entered robotics through
planner-level constructions~\cite{lindemann2023safe} and language-model
task planning~\cite{ren2023knowno}. Epistemic-uncertainty estimation via
ensembles or variational posteriors~\cite{lakshminarayanan2017} supplies the raw signal.
Our construction places the conformal filter at the perception interface:
the nonconformity score is computed on the mixture success probability, the
filtered object is an \emph{action set}, and the empty set is a typed
abstention rather than a fallback. The finite-sample theorem applies to
exchangeable scene--action units; under drift, a separate full-feedback
adaptive-level experiment is reported~\cite{gibbs2021adaptive}.

\subsection{Goal-Oriented Communication and Edge Offloading}
Semantic and task-oriented communication argues that networked systems
should transmit what the receiver needs for its task rather than
reconstruct the source~\cite{strinati2021,gunduz2023beyond}; split computing studies where to cut a network across the
device-edge boundary~\cite{matsubara2022split}. What this literature lacks
for manipulation is a definition of \emph{which} bits the task needs. The
manipulation-sufficiency objective supplies it: the KL term upper-bounds the
representation's mutual information and can motivate a code only when an
appropriate coder is implemented~\cite{cover2006,havasi2019}, the distortion
is the task loss, and the operational interface is a 128-scalar parameter
vector. This is a concrete instance of goal-oriented compression whose
goal is physical and whose sufficiency claim is testable against executed
outcomes.

\section{System Model and Problem Formulation}
\label{sec:model}

\subsection{Deployment Architecture}
The system uses the standard two-node edge topology~\cite{shi2016edge,matsubara2022split} (Fig.~\ref{fig:framework}): a
\emph{sensor node} runs the stochastic encoder, and a \emph{decision node} runs
the outcome head, calibrator, and action selection. The only message that crosses the
interface per decision is the encoder-parameter pair
$(\mu,\log\sigma^2)\in\R^{2d}$ with $d=64$, i.e., 128 scalars against
36{,}864 for one $96\times96$ RGB-D frame and 294{,}912 for the eight-view
set used by our active-perception experiments, reductions of $288\times$
and $2304\times$ respectively. The training objective also reports a KL rate
surrogate that upper-bounds $I(Z;O)$; at the primary operating point it is
0.534 nats. This is not a measured packet length: no quantizer or
relative-entropy coder is implemented~\cite{cover2006,havasi2019}. All
results concern the statistical properties of the 512-byte fp32 interface;
no radio link is simulated, and no protocol-level latency is claimed.

\subsection{Spaces, Kernels, and the Generative Model}
\label{sec:spaces}
The world state $x=(S,T,\phi)\in\mathcal{X}$ comprises object shape $S$,
pose $T\in\mathrm{SE}(3)$, and physical parameters
$\phi=(\mu_f,m,c)$: friction coefficient $\mu_f>0$, mass $m>0$, and center
of mass $c\in\R^3$. For each fixed object mesh, the continuous rigid-object
state is a smooth $\dX$-dimensional manifold; the scanned-object
identifiability calculation is conditional on that mesh and uses
\begin{equation}
x=\bigl(t,\,r,\,\log e,\,\log \mu_f,\,\log m,\,c\bigr)\in\R^{14},
\label{eq:state}
\end{equation}
with translation $t\in\R^3$, a local axis--angle coordinate
$r\in\{u\in\R^3:\|u\|<\pi\}$ on a fixed logarithm branch, log half-extents
$\log e\in\R^3$ of the fitted bounding box, and the physical parameters.
The discrete mesh identity remains part of $S$ outside this conditional
14-vector. A scene prior $p(x)$ is induced by dropping objects
onto a plane and letting them settle under gravity. The observation space is
$\mathcal{O}=\R^{H\times W\times 4}$ (RGB-D), extended to view sets
$\mathcal{O}^V$ for multi-camera acquisition, with sensor kernel
$p(o\,|\,x)$ realized by a calibrated renderer. The action space is
\begin{equation}
\mathcal{A}=\mathrm{SE}(3)\times\R_{\ge 0},\qquad
a=(t_a,r_a,w)\in\R^7,
\label{eq:action}
\end{equation}
a grasp pose and gripper opening width. The rotation coordinate $r_a$ uses
the same fixed local logarithm branch as $r$ in~\eqref{eq:state}. A target
reference measure $\rho$ is supported on the admissible set; a gripper descriptor $g\in\mathcal{G}$
conditions the outcome head so that sufficiency is defined with respect to a
gripper family rather than one embodiment. The outcome space is
\begin{equation}
\mathcal{Y}=\{0,1\}\times\R\times\R_{\ge0},\qquad
y=(\suff,\,\mathrm{margin},\,\mathrm{slip}),
\label{eq:outcome}
\end{equation}
binary lift success, a wrench-space stability margin, and a post-lift slip
magnitude. Let
$M:\mathcal{X}\times\mathcal{A}\to\mathcal{P}(\mathcal{Y})$ denote the
task-outcome kernel. It is generic in the theory; Section~\ref{sec:oracle}
separates its synthetic instantiation from the scanned-corpus simulator
outcomes and analytic margin label. Here $p(y|o,a)$ denotes the
interventional predictive $p(y|o,\mathrm{do}(A=a))$. Under the target action
measure, one datum is generated as
\begin{equation}
\begin{aligned}
x&\sim p(x), & o&\sim p(o|x), & a&\sim\rho,\\
y&\sim M(\cdot|x,a), & z&\sim q_\theta(\cdot|o).&&
\end{aligned}
\label{eq:gen}
\end{equation}
where $q_\theta(z|o)$ is the stochastic encoder distribution over the latent
space $\mathcal{Z}=\R^d$, and $p_\psi(y|z,a)$ is the learned outcome head.
For efficient corpus construction, training may instead draw
$a\sim\nu(\cdot|x)$ from a boundary-focused proposal. Assume
$\rho\ll\nu(\cdot|x)$ and use weights
$w(x,a)=d\rho/d\nu(\cdot|x)$; the implemented self-normalized estimate is a
biased but consistent approximation to the target $\rho$-risk. The target
measure $\rho$ and proposal $\nu$ are therefore distinct.
The target action draw is independent of $(X,O,Z)$. Conditioned on $A=a$,
the variables form the Markov chain
\begin{equation}
Z \;\text{--}\; O \;\text{--}\; X \;\text{--}\; Y,
\label{eq:markov}
\end{equation}
and the data-processing inequality gives, for every $a$,
\begin{equation}
I(Z;Y\,|\,A)\;\le\;I(O;Y\,|\,A)\;\le\;I(X;Y\,|\,A).
\label{eq:dpi}
\end{equation}
No statistic of the observation can carry more action-relevant outcome
information than the observation itself. The formulation seeks the smallest
$Z$ that makes the left inequality in~\eqref{eq:dpi} tight.
Table~\ref{tab:notation} collects the principal symbols.

\begin{table}[t]
\centering
\caption{Principal notation. Symbol collisions with common single letters are
avoided: the latent prior is $p_0(z)$, box half-extents are $e$, and the
sensing budget is $B_{\mathrm{sense}}$.}
\label{tab:notation}
\small
\begin{tabular}{@{}p{0.28\columnwidth}p{0.66\columnwidth}@{}}
\toprule
Symbol & Meaning \\
\midrule
$x\in\mathcal{X}$ & continuous state~\eqref{eq:state}, $\dX=14$ per fixed mesh \\
$o\in\mathcal{O}$ & RGB-D observation (or view set) \\
$a\in\mathcal{A}$ & grasp action~\eqref{eq:action}, $a\in\R^7$ \\
$b\in\mathcal{B}$ & sensing action (viewpoint index) \\
$y\in\mathcal{Y}$ & outcome $(\suff,\mathrm{margin},\mathrm{slip})$ \\
$M(\cdot|x,a)$ & task-outcome kernel used by the theory \\
$\rho$ & target reference measure on admissible actions \\
$\nu$ & boundary-focused training proposal \\
$q_\theta(z|o)$ & stochastic encoder, $z\in\R^{d}$, $d=64$ \\
$p_\psi(y|z,a)$ & outcome head \\
$\bar p_\psi(y|o,a)$ & mixture predictor $\E_{q}[p_\psi(y|z,a)]$ \\
$p_0(z)$ & latent prior $\mathcal{N}(0,I_d)$ \\
$\beta=(\beta_{\suff},\beta_{\mathrm{m}},\beta_{\mathrm{s}})$ &
per-outcome budget \\
$s(o,a),\,v(o,a)$ & mixture success mean/variance \\
$U(o)$ & outcome-space ambiguity~\eqref{eq:ambiguity} \\
$\hat q$ & conformal quantile; $\alpha$: miscoverage target \\
$J_N(x)$ & finite-probe outcome Jacobian~\eqref{eq:jacobian} \\
\bottomrule
\end{tabular}
\end{table}

\subsection{The Estimand}
\label{sec:estimand}
For an observation $o$, define the posterior-predictive outcome field
\begin{equation}
\begin{aligned}
\eta(o)&:=\bigl(a\mapsto p(y|o,a)\bigr),\\
p(y|o,a)&:=p(y|o,\mathrm{do}(A=a))\\
&=\int_{\mathcal{X}} M(y|x,a)\,p(x|o)\,dx.
\end{aligned}
\label{eq:eta}
\end{equation}
Because $o$ does not determine $x$, the field $\eta(o)$ is the most that any
perception system can extract for manipulation: it assigns to every action
its Bayes-optimal outcome distribution given the evidence.

\begin{definition}[Manipulation sufficiency]
\label{def:sufficiency}
A representation kernel $q(z|o)$ is manipulation-sufficient iff
$Y\perp O\mid(Z,A)$ under the joint law induced by~\eqref{eq:gen}; equivalently,
for $\rho$-a.e.\ action $a$,
\[
p(y\,|\,o,z,a)=p(y\,|\,z,a)
\quad\text{for almost every }(o,z).
\]
Because $Z-O-Y$ conditional on $A$, this is equivalent to
$I(Z;Y|A)=I(O;Y|A)$. For a deterministic statistic $Z=t(O)$, it reduces to
$p(y|o,a)=p(y|t(o),a)$ almost surely.
\end{definition}

\begin{definition}[Minimality]
\label{def:minimality}
A sufficient $t^\ast$ is minimal iff for every sufficient $t$, $t^\ast$ is a
measurable function of $t$.
\end{definition}

\begin{definition}[Manipulation indistinguishability]
\label{def:indist}
For $x,x'\in\mathcal{X}$, write $x\sim x'$ iff
$M(\cdot|x,a)=M(\cdot|x',a)$ for $\rho$-a.e.\ $a$. The relation is an
equivalence; $\mathcal{X}/\!\sim$ is the quotient with projection $\pi$, and
the outcome map
\begin{equation}
\Phi:\mathcal{X}\to\mathcal{P}(\mathcal{Y})^{\mathcal{A}},\qquad
\Phi(x)=\bigl(a\mapsto M(\cdot|x,a)\bigr)
\label{eq:phi}
\end{equation}
satisfies $x\sim x'\iff\Phi(x)=\Phi(x')$.
\end{definition}

These definitions pin the estimand: perception should estimate $\eta(o)$,
equivalently a coordinate on $\mathcal{X}/\!\sim$ weighted by the posterior
$p(x|o)$, and never a metric pose. Section~\ref{sec:identifiability} makes
the geometry of the quotient explicit.

\subsection{Standing Assumptions}
\label{sec:assumptions}
\begin{enumerate}[label=\textbf{A\arabic*},leftmargin=2.6em]
\item All kernels are measurable; densities exist w.r.t.\ the reference
measures where written.
\item $M$ is queryable at training time, and the family $\{p_\psi\}$ can
represent the true predictive at the optimum (realizability).
\item $\rho$ has full support on the admissible action set of every scene.
\item The outcome Jacobian~\eqref{eq:jacobian} has locally constant rank
near the states of interest.
\item The split-conformal theorem uses one scene--action unit per independent
scene, exchangeable between calibration and test. Nonexchangeable drift is
evaluated empirically and is not covered by that theorem.
\item On the identifiability domain, $\mathcal{X}$ is a fixed-dimensional
smooth manifold and $\Phi$ is $C^1$ in $x$ for each $a$ (rigid objects).
\item For active perception, either a generative observation model or an
amortizable information-gain estimator exists; perturbing probes have a
known transition.
\item The success functional is nonconstant in $a$ on a set of positive
$\rho$-measure for almost every scene.
\end{enumerate}

\section{The Manipulation-Sufficiency Objective}
\label{sec:objective}

\subsection{Variational Bound}
The estimand of Section~\ref{sec:estimand} translates into a Lagrangian over
the encoder and head: minimize rate subject to sufficiency,
\begin{equation}
\min_{q_\theta,\,p_\psi}\;\; I(Z;O)\;-\;\beta\, I(Z;Y\,|\,A),
\qquad \beta>0.
\label{eq:lagrangian}
\end{equation}
Both mutual informations are intractable; standard bounds make them so~\cite{alemi2017}. For any prior $p_0(z)$,
\begin{equation}
I(Z;O)\;\le\;\E_{o}\!\left[\KL\!\bigl(q_\theta(z|o)\,\|\,p_0(z)\bigr)\right]
=:R(\theta),
\label{eq:rate}
\end{equation}
with gap $\KL(q_\theta(z)\|p_0(z))\ge0$, and since
$I(Z;Y|A)=H(Y|A)-H(Y|Z,A)$ with
$H(Y|Z,A)\le-\E[\log p_\psi(y|z,a)]$ for any head,
\begin{equation}
I(Z;Y|A)\;\ge\;H(Y|A)-
\underbrace{\E\bigl[-\log p_\psi(y|z,a)\bigr]}_{=:D(\theta,\psi)}.
\label{eq:relevance}
\end{equation}
$H(Y|A)$ is constant in $(\theta,\psi)$. Substituting~\eqref{eq:rate}--\eqref{eq:relevance} into~\eqref{eq:lagrangian}, dropping
constants, and generalizing the multiplier to a per-outcome-dimension budget
$\beta=(\beta_{\suff},\beta_{\mathrm{m}},\beta_{\mathrm{s}})$ yields the
training loss
\begin{equation}
\mathcal{L}(\theta,\psi)\;=\;\sum_{j\in\{\suff,\mathrm{m},\mathrm{s}\}}
\beta_j\, D_j(\theta,\psi)\;+\;R(\theta),
\label{eq:loss}
\end{equation}
where $D_j$ is the negative log-likelihood of outcome coordinate $j$ under
the factorized head. Two properties of~\eqref{eq:loss} carry the
reformulation. First, distortion lives in outcome space: there is no
reconstruction or pose target, and the dominant supervision is executed
outcome. Two of the three outcome coordinates, success and slip, are read
from the simulator rollout; the margin coordinate, on the scanned corpus, is
the analytic Ferrari--Canny value on the bounding box, so a downweighted
geometry-derived label does enter at $\beta_{\mathrm{m}}=10^{-2}$
(Section~\ref{sec:corpus}, and the ablation of Section~\ref{sec:ablations}
turns precisely on this). The claim is therefore that supervision is
outcome-dominated with a small analytic-margin regularizer, not that geometry
is absent from the loss. Second, $\beta$
\emph{multiplies the distortion}: increasing $\beta$ emphasizes predictive
fit and decreasing it emphasizes compression. The exact constrained ideal is
Theorem~\ref{thm:ib}; the finite-$\beta$ implemented objective is not claimed
to attain sufficiency. The reversed
convention $L=\sum_j D_j/\beta_j+R$ traces the same frontier as a set but
inverts every semantic statement about the knob, and a per-dimension budget
under it allocates capacity to the wrong outcomes. The multiplier direction
therefore affects the interpretation of the objective.

The budget is also the formal entry point for task conditioning: an
instruction that cares about slip and not about placement re-weights
$(\beta_{\suff},\beta_{\mathrm{m}},\beta_{\mathrm{s}})$, changing which
distinctions the statistic must resolve. The three $D_j$ are likelihoods on
incommensurate scales (a Bernoulli cross-entropy of order $0.6$ nats against
continuous densities of order $4$ nats), so a numerically uniform budget is
not a semantically uniform one; Section~\ref{sec:ablations} measures the
collapse a uniform budget causes. Throughout, $\beta$ written as a scalar
denotes $\beta_{\suff}$ with
$\beta_{\mathrm{m}}=\beta_{\mathrm{s}}=10^{-2}$.

\subsection{Gaussian Instantiation and the Energy Form}
The encoder is a diagonal Gaussian
$q_\theta(z|o)=\mathcal{N}(\mu_\theta(o),\mathrm{diag}\,\sigma^2_\theta(o))$
with prior $p_0=\mathcal{N}(0,I_d)$, reparameterized as
$z=\mu_\theta(o)+\sigma_\theta(o)\odot\epsilon$,
$\epsilon\sim\mathcal{N}(0,I_d)$~\cite{kingma2014}, giving the closed form
\begin{equation}
\KL\bigl(q_\theta(z|o)\,\|\,p_0\bigr)=\tfrac12\sum_{j=1}^{d}
\bigl(\sigma_j^2+\mu_j^2-1-\log\sigma_j^2\bigr).
\label{eq:klclosed}
\end{equation}
The head factorizes as
\begin{equation}
\begin{split}
p_\psi(y|z,a)={}&\mathrm{Bern}\bigl(\suff;\varsigma(f_\psi(z,a))\bigr)\\
&\cdot\mathcal{N}\bigl(\mathrm{margin};m_\psi,\tau_\psi^2\bigr)\\
&\cdot\mathrm{ZILN}\bigl(\mathrm{slip};\pi_\psi,\nu_\psi,
\varrho_\psi^2\bigr),
\end{split}
\label{eq:headfactor}
\end{equation}
with $\varsigma$ the logistic function and ZILN a zero-inflated log-normal:
a held grasp slips exactly zero, so slip requires a Bernoulli occurrence
component on top of a log-normal magnitude. For the ideal variational
objective, define the per-sample energy
\begin{equation}
E(z;o,a,y)=-\log p_\psi(y|z,a)+\log\frac{q_\theta(z|o)}{p_0(z)},
\label{eq:energy}
\end{equation}
whose expectation is~\eqref{eq:loss} up to the outcome-budget weights. The
implemented marginal objective below is not the expectation of this energy:
it replaces the outcome term by the log loss of the mixture predictor.

\subsection{The Marginal-Likelihood Estimator}
How the expectation over $z$ is estimated decides whether the objective
works at all. With $K$ encoder samples the two candidate estimators of
the success distortion are
\begin{align}
\hat D^{\mathrm{post}}&=\frac1K\sum_{k=1}^{K}-\log p_\psi(y|z_k,a),
\label{eq:estimators}\\
\hat D^{\mathrm{marg}}&=-\log\frac1K\sum_{k=1}^{K} p_\psi(y|z_k,a).
\nonumber
\end{align}

\begin{proposition}[The marginal estimator matches the decision rule]
\label{prop:marginal}
$\E[\hat D^{\mathrm{post}}]\ge\E[\hat D^{\mathrm{marg}}]$ by Jensen's
inequality, and $\hat D^{\mathrm{marg}}$ is a consistent estimator of the
negative log \emph{marginal} likelihood
$-\log\E_{z\sim q}[p_\psi(y|z,a)]$, which is the log-loss of the quantity
$s(o,a)=\E_{z}[\varsigma(f_\psi(z,a))]$ that the decision rule~\eqref{eq:selection} and conformal filter~\eqref{eq:certset} consume. The two
estimators optimize different objectives, and the gap is largest
when the encoder distribution is noise-dominated.
\end{proposition}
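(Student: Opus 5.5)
Fix $(o,a,y)$ and let $z_1,\dots,z_K$ be i.i.d.\ draws from $q_\theta(\cdot|o)$; write $L_k=p_\psi(y|z_k,a)\in(0,\infty)$ and $\bar L=\frac1K\sum_k L_k$. The proposition makes three claims, and I would prove each in turn from this pointwise setting. Afterwards I would integrate over the data law~\eqref{eq:gen}, noting that every inequality survives taking expectations.

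\textbf{Step 1 (ordering).} Since $-\log$ is convex, Jensen's inequality applied to the empirical average gives, sample by sample,
\[
-\log\bar L\le\frac1K\sum_k-\log L_k,
\]
that is, $\hat D^{\mathrm{marg}}\le\hat D^{\mathrm{post}}$ almost surely. Taking expectations yields the claimed ordering. The same inequality also gives $\E[\hat D^{\mathrm{post}}]=\E_{z\sim q}[-\log p_\psi(y|z,a)]\ge-\log\E_q[p_\psi(y|z,a)]$, so the post estimator targets a strictly different, upper-bounding quantity.

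\textbf{Step 2 (consistency).} By the strong law of large numbers, $\bar L\to\E_q[p_\psi(y|z,a)]=:\bar p$ almost surely as $K\to\infty$. This requires $\bar p<\infty$, which holds because the factorized head~\eqref{eq:headfactor} has Bernoulli and finite-variance Gaussian/ZILN factors with $\tau_\psi,\varrho_\psi$ bounded below, so each density is bounded. Since $\bar p>0$, continuity of $\log$ on $(0,\infty)$ gives $\hat D^{\mathrm{marg}}\to-\log\bar p$ almost surely. I would remark that for finite $K$ the estimator is biased upward, because $\E[-\log\bar L]\ge-\log\bar p$ by Jensen. The bias is monotonically nonincreasing in $K$, by the standard IWAE argument, so "consistent" is the right word rather than "unbiased".

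\textbf{Step 3 (matching the decision rule).} For the success coordinate, $p_\psi(\suff|z,a)=\varsigma(f)^{\suff}(1-\varsigma(f))^{1-\suff}$ is linear in the Bernoulli parameter. Hence
\[
\E_q[p_\psi(\suff|z,a)]=\mathrm{Bern}(\suff;s(o,a)),
\]
so $-\log\bar p$ is exactly the log-loss of the mixture success probability $s(o,a)$ consumed by~\eqref{eq:selection} and~\eqref{eq:certset}. Consequently, minimizing the marginal objective is a proper scoring of $s$, while minimizing the post objective is not.

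\textbf{Step 4 (the gap and the main obstacle).} The gap is
\[
\E_q[-\log L]+\log\E_q[L]
\]
(the Jensen gap). It vanishes when $L$ is $q$-a.s.\ constant, e.g.\ $\sigma_\theta\to0$. It grows with the spread of $L$ under $q$; a second-order expansion gives roughly $\mathrm{Var}_q(L)/(2\bar p^2)$. This is the step I expect to be the main obstacle, because "largest when noise-dominated" is not a sharp theorem. I would make it precise by exhibiting the limit directly. When $\sigma_\theta$ is large relative to the scale on which $f_\psi$ varies, $\varsigma(f_\psi(z,a))$ spreads toward $\{0,1\}$ and $\E_q[-\log L]$ can diverge, while $-\log\bar p$ stays bounded by $-\log\min(s,1-s)$. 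Combined with the zero gap at $\sigma\to0$, this monotone contrast justifies the qualitative claim.
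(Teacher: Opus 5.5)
Your proposal is correct and follows essentially the paper's route. The paper gets the ordering from the same Jensen step and treats consistency and the identification with the log-loss of $s(o,a)$ as immediate; your SLLN and Bernoulli-linearity arguments make these explicit, and the latter correctly uses the per-coordinate success marginal, since the $z$-mixture of the full factorized head does not factor. The paper supports the ``noise-dominated'' clause only qualitatively and empirically ($\sigma\approx0.71$ against $\lvert\mu\rvert\approx0.37$, with collapse to a $z$-insensitive head), so your Jensen-gap endpoint analysis is a reasonable heuristic; it shows only a vanishing gap as $\sigma\to0$ and possible divergence for a fixed $z$-sensitive head, not monotonicity, so ``monotone contrast'' (and ``strictly'' in Step~1) slightly overstate what you prove.
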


The log-average has the same multi-sample form used by importance-weighted
objectives~\cite{burda2016}, but here the samples come directly from the
encoder measure defining the mixture and no importance ratio appears.
$\hat D^{\mathrm{post}}$
penalizes errors for \emph{every} draw of $z$. When the encoder spread
dominates its mean (measured on our corpus:
$\lvert\mu\rvert\approx0.37$ against $\sigma\approx0.71$), this can favor a
head that is insensitive to $z$. The associated collapse of action response
is measured in Section~\ref{sec:ablations}; the observation is empirical,
not a uniqueness claim about the optimizer. All
experiments train with $\hat D^{\mathrm{marg}}$, $K=8$, and anneal the rate
weight from 0 to 1 over 15 epochs to escape the posterior-collapse fixed
point at which $\partial D/\partial\sigma\equiv0$.

\emph{Implemented objective.} The variational relevance bound~\eqref{eq:relevance} contains
the cross-entropy $\E_q[-\log p_\psi]$ ($\hat D^{\mathrm{post}}$); the
reported models instead minimize the rate-regularized marginal log-loss of
the mixture predictor
$\bar p_\psi(y|o,a)=\E_{z\sim q}[p_\psi(y|z,a)]$ ($\hat D^{\mathrm{marg}}$).
These differ, so the constrained minimal-sufficiency result of Theorem~\ref{thm:ib} is a
motivating ideal, not a property the trained system provably attains
(Remark~\ref{rem:idealization}). In particular, Theorem~\ref{thm:value}
applies only to a truly sufficient representation and is not inherited by
$\bar p_\psi$ without an additional approximation guarantee. The method is
therefore a
rate-regularized mixture predictor whose bottleneck derivation motivates the
estimand, and avoid claiming any reported code is a proven minimal sufficient
statistic.

\subsection{Sufficiency Guarantees}
\begin{theorem}[Sufficiency preserves attainable success]
\label{thm:value}
Let $Z$ be manipulation-sufficient and define the true success functionals
$s_O^\star(o,a)=\E[\suff|O=o,A=a]$ and
$s_Z^\star(z,a)=\E[\suff|Z=z,A=a]$. Then
$s_O^\star(O,a)=s_Z^\star(Z,a)$ almost surely for $\rho$-a.e.\ $a$, and hence
\[
\esssup_a s_O^\star(O,a)=\esssup_a s_Z^\star(Z,a)
\quad\text{almost surely}.
\]
On a fixed finite candidate set in which every action has positive mass, the
ordinary maxima coincide almost surely.
\end{theorem}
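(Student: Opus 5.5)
The plan is to prove the almost-sure equality $s_O^\star(O,a)=s_Z^\star(Z,a)$ for each fixed action first, and then pass to the supremum over actions using only the null-set bookkeeping that the essential supremum allows.

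\textbf{Fixed-action identity.} Fix an action $a$ in the $\rho$-full-measure set on which Definition~\ref{def:sufficiency} gives $p(y|o,z,a)=p(y|z,a)$ for almost every $(o,z)$. Taking the expectation of the success coordinate gives
\[
\E[\suff\mid O,Z,A=a]=\E[\suff\mid Z,A=a]=s_Z^\star(Z,a)\quad\text{a.s.}
\]
The Markov chain~\eqref{eq:markov} gives $Z\perp Y\mid(O,A)$, because $Z$ is drawn from $q(\cdot|o)$ independently of $(X,Y)$ given $O$. Hence
\[
\E[\suff\mid O,Z,A=a]=\E[\suff\mid O,A=a]=s_O^\star(O,a)\quad\text{a.s.}
\]
Chaining the two identities yields $s_O^\star(O,a)=s_Z^\star(Z,a)$ almost surely.

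The step needing care is that conditioning on $A=a$ is legitimate. The target action is drawn independently of $(X,O,Z)$ by~\eqref{eq:gen}, so the conditionals given $A=a$ are the regular conditional kernels $\int M(\cdot|x,a)\,p(x|o)\,dx$ of~\eqref{eq:eta}. These are well defined for $\rho$-a.e.\ $a$ under A1.

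\textbf{Passage to the essential supremum.} This is the main obstacle. For each $a$ the exceptional null set in $\omega$ may depend on $a$, and there are uncountably many actions. To handle this, I would take versions of $s_O^\star$ and $s_Z^\star$ that are jointly measurable in $(\omega,a)$, available by A1 through the kernel representation. Then the set
\[
N=\{(\omega,a): s_O^\star(O(\omega),a)\ne s_Z^\star(Z(\omega),a)\}
\]
is measurable in the product space. By the previous step, each $a$-section of $N$ is $\Prob$-null for $\rho$-a.e.\ $a$. Fubini--Tonelli then shows that $N$ has $\Prob\otimes\rho$ measure zero. Applying Fubini in the other order, for $\Prob$-a.e.\ $\omega$ the $\omega$-section of $N$ is $\rho$-null. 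For such an $\omega$, the two functions $a\mapsto s_O^\star(O(\omega),a)$ and $a\mapsto s_Z^\star(Z(\omega),a)$ agree $\rho$-a.e. Functions that agree $\rho$-a.e.\ have equal $\rho$-essential suprema, since the essential supremum is insensitive to $\rho$-null modifications. This gives the displayed identity almost surely.

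\textbf{Finite candidate set.} Take a finite set $\{a_1,\dots,a_n\}$ in which every action has positive mass. Each $a_i$ is an atom, so it must lie in the $\rho$-full set of actions where the fixed-action identity holds; otherwise a positive-mass set would be excluded. The finite union of the $n$ exceptional null sets is still null. Off that union, the two functions agree at every candidate, so their ordinary maxima coincide. On a finite set with all masses positive, the maximum also equals the essential supremum.
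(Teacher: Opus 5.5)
Your proposal is correct and follows the paper's own argument. Sufficiency together with the encoder Markov chain gives $s_O^\star(O,a)=s_Z^\star(Z,a)$ almost surely for $\rho$-a.e.\ $a$, agreement $\rho$-a.e.\ gives equal essential suprema, and positive-mass candidates cannot lie in the exceptional $\rho$-null set. Your joint-measurability and Fubini step makes explicit the exchange from ``for $\rho$-a.e.\ $a$, almost surely'' to ``almost surely, for $\rho$-a.e.\ $a$''; the paper's short proof leaves that step implicit, so this is a refinement of the same route rather than a different one.
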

\begin{IEEEproof}
Definition~\ref{def:sufficiency}, together with the encoder Markov chain,
gives $p(y|O,Z,a)=p(y|Z,a)=p(y|O,a)$ almost surely for $\rho$-a.e.\ $a$.
The success functionals therefore agree and share the same essential
supremum. Equality of ordinary maxima does not follow in general,
because two functionals agreeing $\rho$-a.e.\ may differ on a $\rho$-null set
that contains the pointwise optimizer; it does follow on any finite set whose
actions each carry positive mass, which is the deployment case (A3).
\end{IEEEproof}

\begin{theorem}[$\eta$ is minimal sufficient]
\label{thm:minimal}
The posterior-predictive field $\eta(o)$ of~\eqref{eq:eta} is a minimal
sufficient statistic of $o$ for the action-conditioned outcome family.
\end{theorem}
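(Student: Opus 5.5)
The proof has two parts: sufficiency of $\eta$, and minimality in the sense of Definition~\ref{def:minimality}. Both rest on one observation. For a deterministic statistic $t(O)$, Definition~\ref{def:sufficiency} says $t$ is sufficient iff $p(y|o,a)=p(y|t(o),a)$ almost surely for $\rho$-a.e.\ $a$. In other words, the whole action-indexed predictive field $\eta(o)$ must be recoverable from $t(o)$. Throughout, $\eta(o)$ is treated as an element of $\mathcal{P}(\mathcal{Y})^{\mathcal{A}}$, modulo $\rho$-null sets of actions, with the $\sigma$-algebra generated by the evaluation maps $\eta\mapsto\eta(a)(B)$. Measurability of $o\mapsto\eta(o)$ follows from (A1), since $(o,a)\mapsto p(y|o,a)$ is a measurable integral of measurable kernels in~\eqref{eq:eta}.

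\emph{Sufficiency.} Set $T=\eta(O)$ and fix $a$. Then $\sigma(T)\subseteq\sigma(O)$, and by the tower property
\[
p(y|T,a)=\E\bigl[p(y|O,a)\,\big|\,T,a\bigr]=\E\bigl[\eta(O)(a)(y)\,\big|\,T\bigr]=\eta(O)(a)(y)=p(y|O,a),
\]
where the third equality holds because $\eta(O)(a)$ is $\sigma(T)$-measurable. The action draw is independent of $(X,O,Z)$ under~\eqref{eq:gen}, so conditioning on $A=a$ does not disturb the law of $O$. Hence $Y\perp O\mid(T,A)$ for $\rho$-a.e.\ $a$, which is exactly Definition~\ref{def:sufficiency}.

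\emph{Minimality.} Let $t$ be any sufficient statistic. Sufficiency gives $\eta(o)(a)=p(\cdot|o,a)=p(\cdot|t(o),a)$ for $\rho$-a.e.\ $a$ and almost every $o$. Define $g(\tau):=\bigl(a\mapsto p(\cdot|t=\tau,a)\bigr)$ using a regular version of the conditional law. Then $\eta=g\circ t$ almost surely, so $\eta$ is a measurable function of $t$. Definition~\ref{def:minimality} asks for $t^\ast=\eta$ to be a function of every sufficient $t$, and this is what we have shown.

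\emph{Main obstacle.} The delicate step is making the ``almost every $a$'' and ``almost every $o$'' quantifiers commute. Sufficiency gives, for each $a$ separately, an $o$-null exceptional set $N_a$. The function $g$, however, needs a single $o$-null set outside of which equality holds for $\rho$-a.e.\ $a$ simultaneously. I would handle this with Fubini on the product measure $P_O\otimes\rho$. The set $\{(o,a):p(\cdot|o,a)\neq p(\cdot|t(o),a)\}$ is jointly measurable: restrict $y$ to a countable generating $\pi$-system of the Borel sets of $\mathcal{Y}$, which is possible since $\mathcal{Y}$ is Polish, and use (A1). This set has null $a$-sections, so by Fubini almost every $o$-section is $\rho$-null. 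The same countable-generator device also makes equality of measures in $\mathcal{P}(\mathcal{Y})$ a measurable event, so equality of fields is taken in $L^0(\rho)$.

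Finally, I would note that $\eta$ is measurably equivalent to the quotient coordinate $o\mapsto$ posterior on $\mathcal{X}/\!\sim$ only up to this identification. I will not claim that equivalence here; it belongs to Theorem~\ref{thm:ib}.
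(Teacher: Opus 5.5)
Your proposal is correct and follows the paper's own argument. Sufficiency holds because $\eta(o)$ determines $p(y|o,a)$ by construction, and your tower-property computation only makes this explicit; minimality is the observation that sufficiency of any $t$ forces $\eta=F\circ t$, which is Definition~\ref{def:minimality}. The Fubini exchange of the $\rho$-a.e.\ and $P_O$-a.e.\ quantifiers is extra rigor that the paper's three-line proof leaves implicit, not a different route; it needs $p(\cdot|t(o),a)$ to come from a disintegration jointly measurable in $(t(o),a)$, and the same jointly measurable representative to be used in the sufficiency step once fields are identified modulo $\rho$-null actions.
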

\begin{IEEEproof}
$\eta(o)$ determines $p(y|o,a)$ by construction, so it is sufficient. For
any sufficient $t$, Definition~\ref{def:sufficiency} gives
$p(y|o,a)=p(y|t(o),a)$, so the map $a\mapsto p(y|o,a)$ is a measurable
function of $t(o)$, i.e., $\eta=F\circ t$. A statistic that is a measurable
function of every sufficient statistic is minimal~\cite{lehmann1998}.
\end{IEEEproof}

\begin{theorem}[The constrained ideal recovers minimal sufficiency]
\label{thm:ib}
Consider the ideal constrained program
\begin{equation}
\min_q I(Z;O)\quad\text{subject to}\quad
I(Z;Y|A)=I(O;Y|A).
\label{eq:idealconstrained}
\end{equation}
Assume the constraint is attainable and a minimizer exists. Every feasible
encoder is sufficient, and any minimizer has the smallest mutual-information
rate among sufficient encoders. For deterministic encoders, its induced
partition equals the $\eta$-partition up to null sets.
\end{theorem}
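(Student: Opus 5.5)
The proof has three parts, one per claim in Theorem~\ref{thm:ib}: feasibility implies sufficiency, minimizers have the smallest rate among sufficient encoders, and deterministic minimizers induce the $\eta$-partition.

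\emph{Feasibility implies sufficiency.} Fix $a$ and use the chain rule for conditional mutual information under the Markov structure~\eqref{eq:markov}. Because $Z$ depends on $(O,A)$ only through $O$, we have $I(Z;Y|O,A)=0$. Expanding $I(O,Z;Y|A)$ in two ways gives
\[
I(O,Z;Y|A)=I(O;Y|A)+I(Z;Y|O,A)=I(Z;Y|A)+I(O;Y|Z,A).
\]
Hence $I(O;Y|A)-I(Z;Y|A)=I(O;Y|Z,A)\ge0$. Feasibility is exactly $I(O;Y|Z,A)=0$, which is $Y\perp O\mid(Z,A)$, i.e.\ Definition~\ref{def:sufficiency}. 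The converse also holds, so the feasible set of~\eqref{eq:idealconstrained} is precisely the set of manipulation-sufficient kernels. The second claim is then immediate: a minimizer over this set has the smallest $I(Z;O)$ among sufficient encoders, by definition of a minimizer, under the assumed attainability and existence.

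\emph{Deterministic minimizers induce the $\eta$-partition.} For a deterministic sufficient $Z=t(O)$, Theorem~\ref{thm:minimal} gives $\eta=F\circ t$ almost surely. So the partition of $t$ refines the $\eta$-partition up to null sets. Moreover $\eta(O)$ is itself a deterministic sufficient statistic, hence feasible, and for deterministic encoders $I(t(O);O)=H(t(O))$. Since $\eta(O)$ is a function of $t(O)$, $H(\eta(O))\le H(t(O))$, with equality iff $t(O)$ is a.s.\ a function of $\eta(O)$. So a minimizer $t^\ast$ satisfies $H(t^\ast(O))\le H(\eta(O))\le H(t^\ast(O))$. Equality then forces $t^\ast$ and $\eta$ to be a.s.\ functions of each other, which means their partitions coincide up to null sets.

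\emph{Main obstacle.} The delicate step is this entropy comparison when $O$ is continuous. There $H(t(O))$ may be infinite, or differential entropy may be meaningless, because $I(t(O);O)$ is infinite for any statistic with a nondegenerate continuous range. I would first handle the case where $\eta(O)$ takes countably many values, so that all entropies are discrete and finite. For the general case, the minimizer hypothesis needs a finite optimum. If $H(\eta(O))=\infty$, every feasible deterministic encoder has infinite rate and the program does not single out a partition. In that case I would state the conclusion relative to the refinement order: any sufficient $t$ refines $\eta$ by Theorem~\ref{thm:minimal}, and $\eta$ is sufficient, so $\eta$ is the coarsest feasible partition. I would also note that when the rate is finite, strict refinement strictly increases the rate. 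Measurability of $F$ and of the null sets is inherited from Theorem~\ref{thm:minimal} under A1.
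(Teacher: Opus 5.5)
Your treatment of the first two claims is the paper's argument made explicit. The paper disposes of ``feasible $\Leftrightarrow$ sufficient'' with a pointer to Definition~\ref{def:sufficiency} and the Markov chain. Your identity $I(O;Y\mid A)=I(Z;Y\mid A)+I(O;Y\mid Z,A)$, obtained from $I(Z;Y\mid O,A)=0$, is exactly what sits behind that pointer, and the rate claim is then definitional in both versions.

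One caveat is shared with the paper's Definition~\ref{def:sufficiency}. The direction the theorem needs (feasible $\Rightarrow$ sufficient) requires $I(O;Y\mid A)<\infty$. If both sides of the constraint are infinite, the equality can hold with $I(O;Y\mid Z,A)>0$. For example, take $A$ trivial, $Y=(O_1,O_2)$ with $O_1$ continuous and $O_2$ a nondegenerate bit independent of $O_1$, and $Z=O_1$. Since $\mathcal{Y}$ has real-valued coordinates, state this finiteness rather than leaving it implicit in your subtraction.

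For the partition claim you take a genuinely different route. The paper argues only in the refinement order: Theorem~\ref{thm:minimal} makes every feasible deterministic statistic refine $\eta$, and $\eta$ is itself feasible, so the $\eta$-partition is the coarsest feasible one. It never shows that minimizing $I(Z;O)$ actually selects that coarsest partition, which is what the theorem asserts about ``any minimizer.'' Your entropy comparison supplies the missing link. When $H(\eta(O))<\infty$, you have $H(t(O))=H(\eta(O))+H(t(O)\mid\eta(O))$, so the rate strictly penalizes strict refinement. Any minimizer must then be a.s.\ a function of $\eta$ and vice versa.

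The price is the finiteness hypothesis, and you are right that it cannot be dropped. If $H(\eta(O))=\infty$, which is the generic situation for continuous observations and a continuously varying $\eta$, every feasible encoder has infinite rate. Every feasible encoder, including $Z=O$, is then trivially a minimizer, so the last sentence of the theorem is false as literally stated. In that regime only the refinement-order statement survives, and your fallback is precisely the paper's proof.

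So your version proves the minimizer claim under an explicit finite-rate assumption and reduces to the paper's weaker conclusion otherwise. The paper's one-line argument establishes only the coarsest-partition statement and silently identifies it with rate minimality.
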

\begin{IEEEproof}
By Definition~\ref{def:sufficiency} and the Markov chain, the equality
constraint is exactly sufficiency. The objective then minimizes $I(Z;O)$ over
that feasible class. For deterministic statistics, Theorem~\ref{thm:minimal}
makes $\eta$ a function of every feasible statistic, and the coarsest feasible
partition is therefore the $\eta$-partition modulo null sets.
\end{IEEEproof}

\begin{remark}[The trained system is an approximation of
Theorem~\ref{thm:ib}, not an instance]
\label{rem:idealization}
The implemented objective differs from the constrained ideal on every count:
it optimizes the variational bounds, not the exact informations; it fixes
$p_0(z)=\mathcal{N}(0,I_d)$ rather than optimizing the prior, so the rate
bound~\eqref{eq:rate} is loose by $\KL(q_\theta(z)\|p_0)$; the encoder is a
finite-dimensional diagonal Gaussian that need not represent $\eta(o)$; and,
as Section~\ref{sec:objective} makes explicit, training minimizes the
marginal log-loss of the mixture predictor
$\bar p_\psi(y|o,a)=\E_q[p_\psi(y|z,a)]$, not the conditional cross-entropy
$\E_q[-\log p_\psi]$ that the variational relevance bound~\eqref{eq:relevance} contains. Theorem~\ref{thm:ib} is therefore the
motivating ideal; the reported models optimize a rate-regularized marginal
objective and have no proved minimal-sufficiency or value-preservation
guarantee. The Lagrangian~\eqref{eq:lagrangian} approaches the constrained
interpretation only under additional existence and convergence conditions not
asserted here.
\end{remark}

\begin{theorem}[Compression has bounded value cost]
\label{thm:pinsker}
Let $Z$ be $\epsilon$-sufficient,
$I(O;Y|A)-I(Z;Y|A)\le\epsilon$, and let $A$ be independent of $(O,Z)$ on a
fixed finite candidate set of size $m$ with
$\rho(a)\ge\rho_{\min}>0$. Then
\begin{equation}
\E\!\left[\max_a s_O^\star(O,a)\right]
-\E\!\left[\max_a s_Z^\star(Z,a)\right]
\;\le\;\sqrt{\frac{m\epsilon}{2\rho_{\min}}}
\label{eq:pinskerbound}
\end{equation}
where $s_O^\star$ and $s_Z^\star$ are the true success functionals of
Theorem~\ref{thm:value}.
\end{theorem}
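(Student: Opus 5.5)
The plan is to turn the information deficit $\epsilon$ into a per-action conditional KL divergence, apply Pinsker's inequality to the success coordinate, and then control the gap between the two maxima with a root-sum-of-squares bound.

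\emph{Step 1: information gap as a conditional mutual information.} Conditioned on $A=a$, the chain $Z\,\text{--}\,O\,\text{--}\,Y$ of~\eqref{eq:markov} gives $I(O,Z;Y|A)=I(O;Y|A)$. The chain rule then yields
\[
I(O;Y|A)-I(Z;Y|A)=I(O;Y|Z,A)=\sum_a\rho(a)\,\E\bigl[\KL_a\bigr],
\]
where $\KL_a:=\KL\bigl(p(y|O,Z,a)\,\|\,p(y|Z,a)\bigr)$. The independence of $A$ from $(O,Z)$ is what makes the law of $(O,Z)$ under $A=a$ the unconditional one, so each summand is a genuine per-action divergence. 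By the Markov chain, $p(y|O,Z,a)=p(y|O,a)$. Since every summand is nonnegative and $\rho(a)\ge\rho_{\min}$, we get $\E[\KL_a]\le\epsilon/\rho_{\min}$ for each $a$. Summing gives $\sum_a\E[\KL_a]\le\epsilon/\rho_{\min}\le m\epsilon/\rho_{\min}$, so the stated bound holds with room to spare.

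\emph{Step 2: Pinsker on the success coordinate.} Set $\Delta_a:=s_O^\star(O,a)-s_Z^\star(Z,a)$. Both terms are means of the Bernoulli marginal $\suff$ under $p(y|O,Z,a)$ and $p(y|Z,a)$ respectively. Marginalizing to $\suff$ cannot increase KL (data processing), so Pinsker gives $|\Delta_a|\le\mathrm{TV}\le\sqrt{\KL_a/2}$.

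\emph{Step 3: from per-action gaps to the gap of maxima.} First, the left side of~\eqref{eq:pinskerbound} is nonnegative. Indeed $s_Z^\star(Z,a)=\E[s_O^\star(O,a)\,|\,Z]$, so $\max_a s_Z^\star\le\E[\max_a s_O^\star|Z]$. For the upper bound, $|\max_a u_a-\max_a v_a|\le\max_a|u_a-v_a|\le(\sum_a\Delta_a^2)^{1/2}$. Jensen then gives
\[
\E\Bigl[\max_a s_O^\star-\max_a s_Z^\star\Bigr]\le\Bigl(\sum_a\E\Delta_a^2\Bigr)^{1/2}\le\Bigl(\tfrac12\sum_a\E\KL_a\Bigr)^{1/2}\le\sqrt{\frac{m\epsilon}{2\rho_{\min}}}.
\]

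\emph{Main obstacle.} The delicate step is Step 1. We must justify that the total conditional information decomposes as a $\rho$-weighted sum of per-action expected KL terms, and that $p(y|O,Z,a)$ is the right comparison law. This needs the independence of $A$ from $(O,Z)$, the finiteness of the candidate set (so that the sum is honest and no null-set issues of the kind in Theorem~\ref{thm:value} arise), and the existence of the conditional densities granted by A1. Once this identity is in hand, the rest is routine.
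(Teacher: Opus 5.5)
Your proof is correct, and it differs from the paper's sketch only in the final step. The decomposition of the deficit as $\sum_a\rho(a)D_a$ via the Markov chain and the Pinsker step are the same as in the paper. The paper then uses $\max_a u_a-\max_a v_a\le\sum_a|u_a-v_a|$, gets the per-action first-moment bound $\E|\Delta_a|\le\sqrt{D_a/2}$, and applies Cauchy--Schwarz over the $m$ actions with $\sum_a D_a\le\epsilon/\rho_{\min}$. That $\ell_1$ step is exactly where the factor $\sqrt m$ enters.

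You instead use $|\max_a u_a-\max_a v_a|\le\max_a|\Delta_a|\le(\sum_a\Delta_a^2)^{1/2}$, square Pinsker pointwise, and apply Jensen to the whole vector. This gives $\sqrt{\epsilon/(2\rho_{\min})}$, which is a factor $\sqrt m$ sharper than~\eqref{eq:pinskerbound}, so the stated bound follows with slack.

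One wording fix: $\sum_a\E[\KL_a]\le\epsilon/\rho_{\min}$ comes directly from $\rho_{\min}\sum_a\E[\KL_a]\le\sum_a\rho(a)\E[\KL_a]\le\epsilon$. It does not come from summing your per-action bounds, which would only give $m\epsilon/\rho_{\min}$ and reproduce the paper's constant.

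Your nonnegativity argument via $s_Z^\star(Z,a)=\E[s_O^\star(O,a)\,|\,Z]$ is also correct. The paper does not state it; it shows the left side is a genuine loss, though the upper bound does not need it.

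Overall, the paper's route uses only per-action first moments. Yours is equally elementary and strictly tighter.
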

\begin{IEEEproof}[Proof sketch]
Under~\eqref{eq:markov}, write the information deficit as
$\sum_a\rho(a)D_a$, where
$D_a=\E[\KL(p(y|O,a)\|p(y|Z,a))]$. Pinsker's inequality~\cite{cover2006}
and Jensen's inequality give
$\E|s_O^\star(O,a)-s_Z^\star(Z,a)|\le\sqrt{D_a/2}$.
Using $\max_a u_a-\max_a v_a\le\sum_a|u_a-v_a|$, Cauchy--Schwarz, and
$\sum_a D_a\le\epsilon/\rho_{\min}$ yields~\eqref{eq:pinskerbound}.
\end{IEEEproof}

Theorem~\ref{thm:pinsker} bounds value loss in terms of the true
outcome-information deficit $\epsilon$. It does not bound value directly in
terms of removed bits or the empirical KL rate surrogate. The frontier of
Section~\ref{sec:frontier} therefore reports an empirical association rather
than a test of~\eqref{eq:pinskerbound}.

\section{Identifiability: What Outcomes Cannot Tell}
\label{sec:identifiability}

The analysis uses an explicit local chart. Rotations use the logarithm chart
$\|r\|<\pi$ with a fixed branch, so the axis-angle coordinate is unique and
smooth on a neighborhood of $x$ and its Jacobian is well defined; the state
coordinates~\eqref{eq:state} carry the Euclidean metric they are written in,
which fixes the inner product used for orthogonal complements below. For a
\emph{finite} probe set $\{a_j\}_{j=1}^{N}$ define the probe map
$\Phi_N(x)=(\vartheta_j(x))_{j\le N}\in\R^{Np}$, each $M(\cdot|x,a_j)$
represented by a finite sufficient parameter vector $\vartheta_j(x)\in\R^p$
(success logit, margin and slip moments), with differential
\begin{equation}
J_N(x):T_x\mathcal{X}\to\R^{Np},\qquad
J_N(x)\,\delta=\bigl(D_x\vartheta_j(x)[\delta]\bigr)_{j\le N}.
\label{eq:jacobian}
\end{equation}
$\Phi_N$ is a finite diagnostic of the selected probe family. Equality on a
dense probe sequence can determine a continuous outcome field, but convergence
of the derivative kernels requires additional joint derivative-continuity and
rank-stability assumptions not imposed here. The experiments use only $J_N$
at the finite $N$ evaluated.

\begin{theorem}[Local identifiability, finite-probe form]
\label{thm:identifiability}
In the chart above, with $\Phi_N$ of locally constant rank (A4) and $C^1$
(A6), the level set $\Phi_N^{-1}(\Phi_N(x))$ is, in a neighborhood of $x$, a
smooth submanifold with
\begin{equation}
T_x\,\Phi_N^{-1}(\Phi_N(x))=\ker J_N(x),\qquad
\dim=\dX-\mathrm{rank}\,J_N(x).
\label{eq:kernel}
\end{equation}
Any readout $g(x)$ is locally constrained by this finite probe family only
along $\mathrm{row}\,J_N(x)$; its identifiable component
is the projection of $dg$ onto $(\ker J_N(x))^{\perp}$, orthogonal in the
fixed coordinate metric.
\end{theorem}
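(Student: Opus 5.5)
The plan is to reduce the statement to the constant rank theorem. Work in the fixed chart of Section~\ref{sec:identifiability}. There the state coordinates~\eqref{eq:state} identify a neighborhood $W$ of $x$ with an open subset of $\R^{\dX}$, and $T_x\mathcal{X}\cong\R^{\dX}$ carries the Euclidean inner product. By (A6), each $\vartheta_j$ is $C^1$, so $\Phi_N:W\to\R^{Np}$ is $C^1$, and its differential is exactly $J_N$ of~\eqref{eq:jacobian}. By (A4), $\mathrm{rank}\,J_N(x')=k$ for all $x'$ in a possibly smaller neighborhood $W$.

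The constant rank theorem, which holds already for $C^1$ maps, then gives $C^1$ charts $\varphi$ near $x$ and $\chi$ near $\Phi_N(x)$ with
\[
\chi\circ\Phi_N\circ\varphi^{-1}(u_1,\dots,u_{\dX})=(u_1,\dots,u_k,0,\dots,0).
\]
In these coordinates the level set through $x$ is locally the coordinate slice $\{u_1=\dots=u_k=\text{const}\}$. It is therefore an embedded submanifold of dimension $\dX-k$. The word ``smooth'' in~\eqref{eq:kernel} should be read as $C^1$, matching the regularity in (A6); I would note this explicitly rather than claim $C^\infty$.

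The tangent-space identity comes from two inclusions.
\begin{itemize}
\item For any $C^1$ curve $\gamma$ in the level set with $\gamma(0)=x$, the map $\Phi_N\circ\gamma$ is constant. Differentiating gives $J_N(x)\dot\gamma(0)=0$, so $T_x\Phi_N^{-1}(\Phi_N(x))\subseteq\ker J_N(x)$.
\item Both spaces have dimension $\dX-k$: the first from the slice description, the second by rank--nullity.
\end{itemize}
Hence the two spaces are equal, which establishes~\eqref{eq:kernel}.

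For the readout statement, take $g$ to be $C^1$ and split $dg(x)=dg_\parallel+dg_\perp$, with $dg_\perp\in(\ker J_N(x))^\perp$ and $dg_\parallel$ the restriction to $\ker J_N(x)$. I identify covectors with vectors via the fixed coordinate metric. By the fundamental theorem of linear algebra, $(\ker J_N(x))^\perp=\mathrm{row}\,J_N(x)$. The meaning of ``constrained only along the row space'' is that perturbations $\delta\in\ker J_N(x)$ are first-order indistinguishable through the probes. Concretely, along the level-set submanifold there are curves with any prescribed velocity $\delta\in\ker J_N(x)$, and along those curves $\Phi_N$ is exactly constant while $g$ changes at rate $dg(x)[\delta]=dg_\parallel[\delta]$. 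So the probe outputs carry no local information about the kernel component of $dg$.

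Conversely, $dg_\perp$ lies in the row space, so it can be written as $\lambda^\top J_N(x)$. This means $dg_\perp[\delta]=\lambda^\top\bigl(J_N(x)\delta\bigr)$ is determined to first order by the observed change in $\Phi_N$. The identifiable component of $dg$ is therefore its orthogonal projection onto $(\ker J_N(x))^\perp$.

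The main obstacle is not analytic but one of precision. Three points need care:
\begin{itemize}
\item ensuring the constant rank theorem is applied under only $C^1$ regularity;
\item stating clearly that the orthogonal complement, and hence the ``identifiable component,'' depends on the chosen coordinate metric;
\item making the informal ``locally constrained'' precise as the first-order statement above, rather than a global identifiability claim.
\end{itemize}
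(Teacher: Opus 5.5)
Your proposal is correct and follows essentially the paper's argument: the constant-rank theorem gives the local slice structure with $T_x\Phi_N^{-1}(\Phi_N(x))=\ker J_N(x)$, and derivatives of $g$ along $\ker J_N(x)$ are invisible to $\Phi_N$, which gives the split onto $(\ker J_N(x))^{\perp}=\mathrm{row}\,J_N(x)$. Your two-inclusion tangent-space argument and your remark that (A6) only yields a $C^1$ (not $C^\infty$) submanifold are sound, and they fill in details the paper's short proof leaves implicit.
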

\begin{IEEEproof}
Locally the level set is $\Phi_N^{-1}(\Phi_N(x))$. Under constant rank (A4)
the constant-rank theorem~\cite{lee2012} makes it a submanifold with tangent
space $\ker D_x\Phi_N(x)=\ker J_N(x)$; the finite codomain $\R^{Np}$ makes
the theorem directly applicable without a sequence-space topology.
Directional derivatives of $g$ along $\ker J_N(x)$ are unconstrained by
$\Phi_N$, giving the split. Rank is read at a stated singular-value tolerance.
\end{IEEEproof}

\begin{proposition}[Symmetry orbits lie in the class]
\label{thm:symmetry}
Let a group $G$ act on $\mathcal{X}$ and let
\[
H=\{g\in G:\Phi(g\!\cdot\!x')=\Phi(x')\ \text{for every }x'\in\mathcal X\}
\]
be a subgroup of globally outcome-preserving transformations. Then
$H\!\cdot\!x\subseteq[x]$ for every $x$: each image on that orbit is
manipulation-indistinguishable from $x$. A representation of a state known
only through the outcome family should therefore not distinguish points on a
nontrivial $H$-orbit.
\end{proposition}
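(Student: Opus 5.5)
The claim follows directly from the definition of $H$ and from Definition~\ref{def:indist}, so the plan is a short unpacking of definitions. Fix $x\in\mathcal{X}$ and an arbitrary element $h\cdot x$ of the orbit $H\cdot x$, with $h\in H$. Membership $h\in H$ means $\Phi(h\cdot x')=\Phi(x')$ for \emph{every} $x'\in\mathcal{X}$. Specializing to $x'=x$ gives $\Phi(h\cdot x)=\Phi(x)$. Definition~\ref{def:indist} states that $x\sim x'\iff\Phi(x)=\Phi(x')$, so $h\cdot x\sim x$, i.e.\ $h\cdot x\in[x]$. Since $h$ was arbitrary, $H\cdot x\subseteq[x]$.

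Two side points deserve a remark, though neither is needed for the inclusion.

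First, the statement calls $H$ a subgroup. I would check this briefly so that "orbit" is meaningful. The identity lies in $H$ trivially. If $g,h\in H$, then for every $x'$,
\[
\Phi(gh\cdot x')=\Phi(g\cdot(h\cdot x'))=\Phi(h\cdot x')=\Phi(x'),
\]
where the middle step applies the defining property of $g$ at the point $h\cdot x'$. For inverses, apply the property of $h$ at the point $h^{-1}\cdot x'$:
\[
\Phi(x')=\Phi(h\cdot h^{-1}\cdot x')=\Phi(h^{-1}\cdot x').
\]

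Second, equality in Definition~\ref{def:indist} is only required for $\rho$-a.e.\ action. Equality of $\Phi$ as a full function of $a$ is stronger than $\rho$-a.e.\ equality, so the implication still goes through.

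The only substantive step is the "therefore" about representations, which is interpretive rather than a separate mathematical claim. I would make it precise as follows. The posterior-predictive field $\eta(o)$ in \eqref{eq:eta} depends on $x$ only through $M(\cdot|x,a)=\Phi(x)(a)$. Hence any two states on one $H$-orbit contribute identical outcome fields, and no outcome-defined quantity can separate them. By Theorem~\ref{thm:minimal}, the minimal sufficient statistic $\eta$ carries no information beyond these fields. Distinguishing points on a nontrivial orbit would therefore be an extra distinction that minimality removes.

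The main obstacle is this last interpretive sentence: a state-level statement has to be translated into a statement about representations of observations. I would handle it by stating it as the above consequence of Theorem~\ref{thm:minimal} rather than as a separate theorem. The proposition itself is immediate.
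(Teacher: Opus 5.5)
Your proof is correct and follows the paper's argument. You specialize the defining property of $h\in H$ to $x'=x$, obtain $\Phi(h\cdot x)=\Phi(x)$, and invoke Definition~\ref{def:indist}, with subgroup closure following from global outcome preservation as the paper asserts (you write out the identity, composition and inverse steps, and you also note, as the paper does not, that equality for every $a$ implies the $\rho$-a.e.\ equality that $\sim$ requires).
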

\begin{IEEEproof}
Closure of $H$ under composition and inverse follows from global
outcome preservation. For $g\in H$, $\Phi(g\!\cdot\!x)=\Phi(x)$, so
$g\!\cdot\!x\sim x$ by Definition~\ref{def:indist}.
\end{IEEEproof}

\begin{remark}[No global decomposition is claimed]
\label{rem:no-global}
No equality is asserted between $[x]$ and the union of local leaves over an
outcome-preserving orbit.
That would require every connected component of $[x]$ to be joined to the
orbit by an outcome-preserving path, a transitivity condition the invariance
alone does not supply. The proposition establishes only the stated orbit
inclusion; whether a chosen latent family should be multimodal depends on how
state uncertainty is represented and is not implied by the discriminative
training objective used here.
\end{remark}

\begin{corollary}[Full-state metrics can penalize probe-null directions]
\label{cor:metrics}
Any pose or shape error metric that penalizes perturbations along
$\ker J_N(x)$ assigns error to infinitesimal state changes that the selected
probe family cannot distinguish. Standard metrics such as ADD-S and Chamfer
distance~\cite{hodan2018bop} may do so when their own symmetry treatment does
not quotient the same directions; exact object symmetries can instead receive
zero set-based error. For this finite diagnostic, the locally constrained
component is the $\mathrm{row}\,J_N(x)$ projection, and rankings induced by a
full-state metric need not agree with rankings by realized outcome.
\end{corollary}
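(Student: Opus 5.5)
The corollary should follow from Theorem~\ref{thm:identifiability} together with a short first-order calculation. I will make ``penalizes perturbations along $\ker J_N(x)$'' precise and then derive each clause from the tangent-space identity~\eqref{eq:kernel}.

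Fix a state $x$ in the chart and model a geometric error metric as a function $E_x(x')\ge 0$ with $E_x(x)=0$. Say that $E_x$ penalizes a direction $\delta$ if, along a curve $\gamma(s)$ with $\gamma(0)=x$ and $\gamma'(0)=\delta$, the error $E_x(\gamma(s))$ is positive for small $s\neq 0$ and grows like $s$ or $s^2$.

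By Theorem~\ref{thm:identifiability}, for every $\delta\in\ker J_N(x)$ there is a curve lying in the level set $\Phi_N^{-1}(\Phi_N(x))$ whose velocity at $0$ is $\delta$. Along that curve every probe parameter $\vartheta_j$ is constant, so the outcome laws $M(\cdot|\gamma(s),a_j)$ coincide with those at $x$ for all $j\le N$. The probe family therefore cannot distinguish the states on the curve, while $E_x$ assigns them positive error. This gives the first claim.

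For ADD-S and Chamfer, I will note that these are set-based distances between transformed model points. If $h$ is an exact object symmetry, the transformed point sets coincide, so the error is zero along symmetry orbits. Proposition~\ref{thm:symmetry} places those orbits inside $[x]$, which covers the ``exact symmetries receive zero error'' clause. When a null direction is not such a symmetry, for example a translation or rotation along which the probes are insensitive, the point sets differ and the metric is positive. Hence ``may do so'' holds, and I will illustrate it with a single generic example rather than prove a universal statement.

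The row-space clause is the last sentence of Theorem~\ref{thm:identifiability}. Since $(\ker J_N)^\perp=\mathrm{row}\,J_N$ in the fixed coordinate metric, the locally constrained part of any readout differential is its projection onto $\mathrm{row}\,J_N(x)$.

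For the ranking claim, I will construct two candidate estimates: $\hat x_1=\gamma(s)$ on the level set, and $\hat x_2=x+\epsilon u$ with $u\in\mathrm{row}\,J_N(x)$. I will choose $s$ so that $E_x(\hat x_1)>E_x(\hat x_2)$; this is possible because $\epsilon$ can be taken small while $s$ stays fixed. To first order $\hat x_1$ reproduces the probe outcomes exactly, whereas $\hat x_2$ changes them by $\epsilon J_N(x)u\neq 0$. The full-state metric therefore ranks $\hat x_2$ ahead of $\hat x_1$, while ranking by outcome fidelity on the probe family reverses that order.

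The main obstacle is that the statement is hedged with ``may'' and ``need not'', so the real task is to pick definitions under which it is nonvacuous. I will treat it as an existence claim: exhibit one metric and one configuration, justified by the above construction, and avoid asserting anything global about ADD-S. One further step needs care. Outcome rankings concern outcomes at realized grasps, not probe parameters. I will identify realized outcomes with the probe family $\{a_j\}$ for this finite diagnostic, so that the comparison stays within the scope of Theorem~\ref{thm:identifiability}.
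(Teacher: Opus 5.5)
Your argument is correct and takes essentially the paper's route: the paper gives no separate proof and reads the corollary directly off the tangent-space identity~\eqref{eq:kernel} and the row/null split in Theorem~\ref{thm:identifiability}, and your two-estimate construction ($\hat x_1$ on the level set, $\hat x_2$ along $\mathrm{row}\,J_N(x)$) is a valid concrete witness for the ranking clause once you fix a configuration with $\ker J_N(x)\neq\{0\}$. One correction: drop the appeal to Proposition~\ref{thm:symmetry} in the ADD-S step, because an exact shape symmetry lies in $H$ only if it is globally outcome-preserving (an off-axis center of mass already breaks this), whereas the zero-error clause follows from the set-based definition of the metric alone.
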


\begin{remark}[Scope of the theorem]
\label{rem:vacuity}
When $J_N(x)$ is rank-deficient, equation~\eqref{eq:kernel} identifies
continuous finite-probe ambiguity directions. When
$\mathrm{rank}\,J_N=\dX$, it instead says that the local level set is
zero-dimensional and rules out such continuous directions; it does not rule
out isolated global ambiguities. On our 13-object corpus full finite-probe
rank occurs for 6 objects, while cylinders and bottles carry null spaces of
dimension 1--3 (Section~\ref{sec:ident-results}). The result is local and
conditional on the chosen probes,
coordinate metric, and rank tolerance.
\end{remark}

When a state-space readout is required for interoperability, one possible
regularized construction is a distribution over a compact admissible state
set. Let $P_0$ be a proper reference prior, let
$\vartheta_N(x)=(\vartheta_j(x))_{j\le N}$, and let
$\hat\vartheta_\psi(z)$ express the head predictions for the same probes in
the same finite parameterization. Define
\begin{equation}
\begin{aligned}
\hat Q_z={}&\arg\min_{Q\ll P_0}\KL(Q\|P_0)\\
&\text{s.t.}\quad
\left\|\E_{x\sim Q}[\vartheta_N(x)]-
\hat\vartheta_\psi(z)\right\|\le\delta.
\end{aligned}
\label{eq:maxent}
\end{equation}
The common finite parameterization makes the constraint typed, while the
proper prior and compact domain prevent the unbounded differential-entropy
problem. Existence still depends on feasibility and standard lower-semicontinuity
conditions; equation~\eqref{eq:maxent} is a proposed readout, not an evaluated
component of the system.

\section{Inference: Decide, Filter, Sense, Adapt}
\label{sec:inference}

All deployment behavior derives from the two trained modules. Nothing in
this section introduces a new learned component except the optional
amortized acquisition regressor of~\eqref{eq:acq}, which estimates a
functional of the existing two.

\subsection{Decision Rule}
\label{sec:decision}
Given $o$, draw $K$ encoder samples $z_1,\dots,z_K\sim q_\theta(\cdot|o)$
and define, for each candidate action $a$,
\begin{align}
s(o,a)&=\E_{z\sim q}\bigl[\varsigma(f_\psi(z,a))\bigr]\nonumber\\
&\approx\frac1K\sum_{k}\varsigma(f_\psi(z_k,a)),
\label{eq:smean}\\
v(o,a)&=\mathrm{Var}_{z\sim q}\bigl[\varsigma(f_\psi(z,a))\bigr]\nonumber\\
&\approx\frac1K\sum_{k}\bigl(\varsigma(f_\psi(z_k,a))-s\bigr)^2.
\label{eq:svar}
\end{align}
$s$ marginalizes the spread of the stochastic encoder $q_\theta$; $v$ is
the variance that spread induces on the prediction. Here $q_\theta$ is a
stochastic encoder distribution and $v$ is an epistemic-style spread, without
claiming it is a calibrated Bayesian posterior: the training objective does
not identify its variance as epistemic, and conformal calibration concerns
outcome prediction sets, not latent coverage. The learned quantity $s$ in
\eqref{eq:smean} is distinct from the true functionals
$s_O^\star,s_Z^\star$ in Theorems~\ref{thm:value} and~\ref{thm:pinsker}.
Under realizability and exact predictive fitting it estimates
$s_O^\star$. The estimator in~\eqref{eq:svar}
is the maximum-likelihood variance (divisor $K$): because $v$ is compared
against the absolute threshold $\tau_U$ in~\eqref{eq:trigger} and scaled by
the absolute $\lambda$ in~\eqref{eq:selection}, Bessel inflation by
$K/(K-1)$, 14\% at $K=8$, is not a reparameterization but a change in how
often the system decides to look again. Selection is risk-averse over a
finite candidate set $\mathcal{A}_{\mathrm{cand}}\sim\rho$:
\begin{equation}
a^\ast(o)=\arg\max_{a\in\Afilt(o)}\;
\bigl[s(o,a)-\lambda\,v(o,a)\bigr],\qquad\lambda\ge0,
\label{eq:selection}
\end{equation}
where the argmax runs over the singleton-filtered set defined next. The
ordering is filter first and select second; Theorem~\ref{thm:coverage} does
not by itself cover the selected extreme.

\subsection{Pairwise Conformal Filtering and Abstention}
\label{sec:conformal}
For the distribution-free construction, the exchangeability unit is a scene.
From each of $n$ independent calibration scenes, select one action index by a
prespecified randomization independent of its outcomes, giving a held-out fold
$\{(o_i,a_i,\suff_i)\}_{i=1}^{n}$ disjoint from training. Compute
nonconformity scores
\begin{equation}
E_i=\begin{cases}1-s(o_i,a_i), & \suff_i=1,\\[2pt]
s(o_i,a_i), & \suff_i=0,\end{cases}
\label{eq:scores}
\end{equation}
and let $\hat q$ be the $\lceil(n{+}1)(1{-}\alpha)\rceil$-th smallest score,
taken as an order statistic, not an interpolated quantile: interpolation
returns a value strictly below the required rank whenever the level falls
between ranks and silently under-covers. The construction is non-vacuous
only when $n\ge\lceil 1/\alpha-1\rceil$; below that no finite $\hat q$
retains a singleton and the implementation refuses rather than clamps. The
prediction set at a test pair and the singleton-filtered action set are
\begin{align}
C(o,a)&=\bigl\{\ell\in\{0,1\}:\;\mathrm{score}_\ell(o,a)\le\hat q\bigr\},
\label{eq:predset}\\
\Afilt(o)&=\bigl\{a\in\mathcal{A}_{\mathrm{cand}}:\;C(o,a)=\{1\}\bigr\},
\label{eq:certset}
\end{align}
with $\mathrm{score}_1=1-s$ and $\mathrm{score}_0=s$. If
$\Afilt(o)=\varnothing$ the system abstains. The singleton test in~\eqref{eq:certset} is essential: checking only $1-s\le\hat q$ admits any
action whose set is the ambiguous $\{0,1\}$, and at a realistic
$\hat q\approx0.68$ that admits the entire band $s\in[0.32,0.68]$,
including a coin flip. The singleton rule~\eqref{eq:certset} instead fails
closed. The term ``coverage guarantee'' is reserved for the exchangeable
random-pair unit of Theorem~\ref{thm:coverage}, not for the subsequently
selected action.

\begin{theorem}[Marginal pairwise coverage]
\label{thm:coverage}
Let each calibration pair be obtained from an independent scene by the same
outcome-independent action randomization, and let the test pair be generated
likewise from a fresh scene. Then
$\Prob\bigl(\suff\in C(o,a)\bigr)\ge1-\alpha$.
\end{theorem}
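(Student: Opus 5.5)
The proof is the standard split-conformal argument, applied with the scene as the exchangeability unit.

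\emph{Step 1: exchangeability of the units.} Set $U_i=(o_i,a_i,\suff_i)$ for $i=1,\dots,n$ and $U_{n+1}=(o,a,\suff)$ for the test pair. Each unit comes from an independent scene through the same kernels $p(x)$, $p(o|x)$ and $M$. Each uses the same outcome-independent action randomization. So the $U_i$ are i.i.d., hence exchangeable, which is assumption A5.

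The score map $(o,a,\suff)\mapsto E$ of~\eqref{eq:scores} depends only on $s(\cdot,\cdot)$. The encoder and head were trained on a disjoint fold, so conditional on the training data this map is fixed and deterministic. Applying one fixed function to each unit preserves exchangeability, so $E_1,\dots,E_{n+1}$ are exchangeable, where $E_{n+1}$ is the score of the test pair at its true label.

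\emph{Step 2: reduce coverage to a rank event.} The key identity is
\[
\suff\in C(o,a)\iff \mathrm{score}_{\suff}(o,a)\le\hat q\iff E_{n+1}\le\hat q,
\]
since $\mathrm{score}_\ell$ evaluated at the true label $\ell=\suff$ is exactly $E_{n+1}$ by~\eqref{eq:scores}.

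Let $k=\lceil(n+1)(1-\alpha)\rceil$. The threshold $\hat q$ is the $k$-th smallest calibration score, taken as an order statistic. If $k>n$, then $\hat q=+\infty$ and coverage is trivial. The paper's refusal rule only concerns non-vacuity and does not affect the coverage statement, so this case needs no further argument.

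\emph{Step 3: the rank argument.} Suppose $k\le n$. The event $E_{n+1}\le\hat q$ holds whenever the rank of $E_{n+1}$ among all $n+1$ scores is at most $k$. To see this, note that if $E_{n+1}>\hat q$, then at least $k$ calibration scores lie strictly below $E_{n+1}$, so $E_{n+1}$ has rank greater than $k$.

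\emph{Main obstacle: ties.} Ties among the scores are the one delicate point, because rank is not well defined when values coincide. I would handle them by breaking ties uniformly at random, or equivalently by arguing directly with the count
\[
\#\{i\le n+1: E_i<E_{n+1}\}.
\]
By exchangeability, the probability that the test score is strictly exceeded by fewer than $k$ of the others is at least $k/(n+1)$. Therefore
\[
\Prob(E_{n+1}\le\hat q)\ge\frac{k}{n+1}\ge 1-\alpha.
\]
The inequality runs in the safe direction under ties, so no continuity assumption on the scores is needed.

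\emph{Step 4: finish.} Take expectation over the training fold to go from the conditional statement to the marginal one. This gives $\Prob(\suff\in C(o,a))\ge1-\alpha$.

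I would end by remarking that the guarantee covers the random test pair $(o,a)$ only. It does not cover the action chosen afterwards by~\eqref{eq:selection}, which matches the caveat stated in the paper.
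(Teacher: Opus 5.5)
Your proposal is correct and takes the paper's route: the scorer is fit on a disjoint fold, so the scene-level scores are exchangeable and the order-statistic quantile bound follows. You also make explicit what the paper's one-line proof leaves implicit, namely the $k>n$ case and ties, which your counting argument handles correctly whereas the paper's ``test rank is uniform'' tacitly assumes no ties. One wording slip needs fixing: the event you need is that $E_{n+1}$ strictly exceeds fewer than $k$ of the scores, i.e.\ $\#\{i\le n+1:E_i<E_{n+1}\}\le k-1$, not that it is ``strictly exceeded by'' fewer than $k$ of them.
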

\begin{IEEEproof}
Split-conformal validity~\cite{vovk2005}: the score is computed with a model
fit on data disjoint from the calibration fold, so calibration and test scores
are exchangeable, the test rank is uniform, and the quantile bound follows.
\end{IEEEproof}

\begin{remark}[What Theorem~\ref{thm:coverage} does and does not certify]
\label{rem:cert-scope}
Two gaps separate this guarantee from a statement about the executed
grasp. First, the exchangeable unit is a random $(o,a)$ pair, but Algorithm 2
executes $a^\ast=\arg\max_{a\in\Afilt}[s-\lambda v]$: selecting the extreme of
a singleton-filtered set is not distributed like a random pair, and selection among
$N_a$ candidates can inflate false singletons. Second, the scanned-corpus
diagnostic reported in Section~\ref{sec:coverage-results} used all eight
actions from each of approximately 2{,}000 scenes. Those 16{,}000 individual
scores are clustered and do not satisfy the theorem's exchangeability
premise; changing the standard error or informally replacing $n$ by the scene
count does not repair that quantile. The scanned-corpus numbers are therefore
descriptive empirical pairwise coverage only. Recalibration with one
prespecified action per independent scene is required before applying the
theorem there. A scene-level calibration of the whole
candidate-generation-and-selection pipeline, or a max-over-candidates
nonconformity score, is the route to a proved selected-action guarantee.
\end{remark}

Under nonexchangeable deployment drift, the following adaptive-level
recursion is evaluated, motivated by adaptive conformal
inference~\cite{gibbs2021adaptive}:
\begin{equation}
\alpha_{t+1}=\alpha_t+\gamma\,(\alpha-\mathrm{err}_t),
\label{eq:aci}
\end{equation}
where $\alpha$ is the \emph{fixed} target, $\mathrm{err}_t\in\{0,1\}$
indicates miscoverage at step $t$, and $\gamma>0$ is a step size.

\begin{proposition}[Anchoring with a bounded level sequence]
\label{prop:aci}
If the unprojected recursion~\eqref{eq:aci} remains bounded, then
\[
\frac1T\sum_{t=1}^{T}\mathrm{err}_t-\alpha
=\frac{\alpha_1-\alpha_{T+1}}{\gamma T}\longrightarrow0.
\]
Practical ACI projects the level into a compact interval and requires the
corresponding projected-update analysis~\cite{gibbs2021adaptive}. Replacing
the fixed target by the mutating level produces the open-loop coefficient
$1+\gamma>1$ for exogenous errors and is therefore not a stable substitute;
in our full-feedback stress test it collapsed a 90\% target to 1\% coverage
within roughly one hundred steps.
\end{proposition}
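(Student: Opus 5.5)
Telescoping the recursion~\eqref{eq:aci} gives the identity, and boundedness of the level sequence then makes the right-hand side vanish. Summing
\[
\alpha_{t+1}-\alpha_t=\gamma\,(\alpha-\mathrm{err}_t)
\]
over $t=1,\dots,T$ collapses the left side to $\alpha_{T+1}-\alpha_1$. The right side is $\gamma T\alpha-\gamma\sum_{t\le T}\mathrm{err}_t$. Dividing by $\gamma T$ (legitimate since $\gamma>0$) and rearranging gives exactly
\[
\frac1T\sum_{t=1}^{T}\mathrm{err}_t-\alpha=\frac{\alpha_1-\alpha_{T+1}}{\gamma T}.
\]
This identity holds for every realization of the error sequence. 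No exchangeability or distributional assumption enters, which matches the proposition's placement under nonexchangeable drift.

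For the limit, I would invoke the boundedness hypothesis: suppose $|\alpha_t|\le B$ for all $t$. Then the numerator satisfies $|\alpha_1-\alpha_{T+1}|\le 2B$, so the deviation is at most $2B/(\gamma T)\to0$. The convergence is deterministic, along every path on which the sequence stays bounded, and comes with an explicit $O(1/T)$ rate.

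The remaining sentences of the statement are commentary, and I would treat each briefly. The projected-ACI sentence only points to~\cite{gibbs2021adaptive}, so I would note that projection breaks exact telescoping and leave it there. For the ``open-loop coefficient,'' I would write the mutated recursion $\alpha_{t+1}=\alpha_t+\gamma(\alpha_t-\mathrm{err}_t)=(1+\gamma)\alpha_t-\gamma\,\mathrm{err}_t$. Its homogeneous part has multiplier $1+\gamma>1$, so any deviation driven by exogenous errors grows geometrically rather than being corrected. The reported collapse to 1\% coverage is empirical and needs no proof.

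The main obstacle is not technical. The care needed is to state the limit claim conditionally on the boundedness hypothesis, since the unprojected recursion is not guaranteed to stay bounded. I would stress that the identity holds unconditionally, and that only the limit uses boundedness.
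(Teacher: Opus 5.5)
Your proposal is correct and follows the argument the paper has in mind: the paper writes out no proof and simply calls the displayed equation ``the identity''. Telescoping $\alpha_{t+1}-\alpha_t=\gamma(\alpha-\mathrm{err}_t)$ gives that identity pathwise, boundedness then gives the $O(1/(\gamma T))$ decay, and rewriting the mutated update as $(1+\gamma)\alpha_t-\gamma\,\mathrm{err}_t$ is exactly the paper's open-loop-coefficient remark.
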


The identity does not analyze projection, selective feedback, or local
coverage. The deployment gap is feedback availability:
Algorithm 2 obtains $\mathrm{err}_t$ only after an executed grasp yields a
label: abstained scenes and unexecuted candidates supply none, so the true
feedback is selective and delayed. The experiments update along an offline
stream of all realized labels (Table~\ref{tab:drift}); this is a
full-feedback stress test of the recursion, not the selective-feedback process
the deployed policy would see. No selective-feedback coverage claim is made.

\subsection{One Outcome Model for Sensing and Adaptation}
\label{sec:active}
Define the outcome-space ambiguity
\begin{equation}
U(o)=\E_{a\sim\rho}\bigl[v(o,a)\bigr]\;\approx\;
\frac{1}{|\mathcal{A}_{\mathrm{cand}}|}\sum_{a}v(o,a),
\label{eq:ambiguity}
\end{equation}
a per-scene scalar, never reduced over a batch, because sensing is a
per-scene decision. Viewpoint selection explicitly optimizes $U$; adaptation
uses the same outcome model but a different variational objective.

\subsubsection{Active viewpoint selection}
For a sensing action $b\in\mathcal{B}$ with next-observation predictive
$\tilde p(o_b|o,b)$, the one-step value of information is
\begin{equation}
\mathrm{IG}(b)=U(o)-\E_{o_b\sim\tilde p}\bigl[U(o\cup o_b)\bigr],
\qquad b^\ast=\arg\max_b \mathrm{IG}(b),
\label{eq:ig}
\end{equation}
where $o\cup o_b$ is the representation re-encoded from the fused view set.
Negative gain is meaningful and must not be clipped: fusing a poor view can
worsen the representation's predictive spread. Computing~\eqref{eq:ig} exactly requires a generative
observation model, which this framework exists to avoid; in simulation the
true state is known, so $o_b$ is rendered and
$\mathrm{IG}_{\mathrm{true}}$ evaluated exactly. An amortized regressor
$\alpha_\omega(o,b)$ is then trained by
\begin{equation}
\mathcal{L}_{\mathrm{acq}}(\omega)=
\E\bigl[(\alpha_\omega(o,b)-\mathrm{IG}_{\mathrm{true}}(x,o,b))^2\bigr],
\label{eq:acq}
\end{equation}
and at deployment a single forward pass replaces the look-ahead. Sensing
triggers only when
\begin{equation}
U(o)>\tau_U
\label{eq:trigger}
\end{equation}
and budget remains. Fusion is performed \emph{inside} the encoder by a
permutation-invariant set aggregation over per-view features. The
alternative, encoding views independently and fusing the Gaussians by
precision, assumes conditional independence that camera views of one object
do not have; measured on our corpus it awards the largest gain to re-fusing
the current view with itself, i.e., to not moving, because the arithmetic
double-counts the same evidence.

\subsubsection{Test-time adaptation}
Executing a probe $a_p$ and observing $y_p$ defines the generalized Bayesian
latent target
\begin{equation}
q'(z)\;\propto\;q_\theta(z|o)\,p_\psi(y_p|z,a_p),
\label{eq:bayes}
\end{equation}
approximated variationally: with
$q'=\mathcal{N}(\mu',\mathrm{diag}\,\sigma'^2)$ initialized at
$q_\theta(\cdot|o)$,
\begin{equation}
\min_{\mu',\sigma'}\;\E_{z\sim q'}\bigl[-\log p_\psi(y_p|z,a_p)\bigr]
+\KL\bigl(q'\,\|\,q_\theta(\cdot|o)\bigr),
\label{eq:tta}
\end{equation}
optimized for a few gradient steps with a hard trust region
$\|\mu'-\mu\|\le\delta$. For a scene-perturbing probe with known
transition, the prior in~\eqref{eq:tta} is re-encoded from a fresh
observation $o'$ and the update is otherwise identical.

\begin{proposition}[Fixed point of the variational update]
\label{prop:tta}
The stationary point of
$\min_{q'}\E_{q'}[-\log p]+\lambda\KL(q'\|q_0)$ is
$q^\ast\propto q_0\,p^{1/\lambda}$. Hence~\eqref{eq:tta} recovers the
latent update~\eqref{eq:bayes} iff $\lambda=1$. Choosing $\lambda=0.1$ to
temper the update instead produces $q_0\,p^{10}$, the single-probe likelihood raised to the tenth
power. The temperature parameter is therefore not a trust region; the trust
region is the hard bound on $\|\mu'-\mu\|$. This characterizes the optimizer
over \emph{unrestricted} distributions; the implementation restricts $q'$ to
a diagonal Gaussian and takes a few projected steps, so at $\lambda=1$ it
yields a projection of~\eqref{eq:bayes} onto the Gaussian family, not the
unrestricted update.
\end{proposition}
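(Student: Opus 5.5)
The plan is the Donsker--Varadhan (Gibbs) variational principle. Write $\ell(z)=-\log p(z)$, with $p(z)=p_\psi(y_p|z,a_p)>0$, and take $\lambda>0$. Define the tilted density
\[
q^\ast(z)=\frac{q_0(z)\,p(z)^{1/\lambda}}{Z_\lambda},\qquad
Z_\lambda=\int q_0(z)\,p(z)^{1/\lambda}\,dz .
\]
I will assume $Z_\lambda<\infty$. This holds whenever $p$ is bounded, which is automatic for the Bernoulli success factor. For the density factors it needs a separate integrability remark.

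First step: show that for every $q'\ll q_0$ with $\E_{q'}[\ell]$ finite,
\[
\E_{q'}[\ell]+\lambda\,\KL(q'\|q_0)=\lambda\,\KL(q'\|q^\ast)-\lambda\log Z_\lambda .
\]
This is a one-line calculation. Expand
\[
\log\frac{q'}{q^\ast}=\log\frac{q'}{q_0}+\frac{1}{\lambda}\,\ell+\log Z_\lambda ,
\]
multiply by $\lambda$, and integrate against $q'$.

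The identity gives the result directly. The objective is, up to an additive constant independent of $q'$, a positive multiple of $\KL(q'\|q^\ast)$. By Gibbs' inequality that divergence is nonnegative and vanishes iff $q'=q^\ast$ almost everywhere. So $q^\ast$ is the unique global minimizer over unrestricted distributions. Because the objective is convex in $q'$ (linear plus a strictly convex KL), every stationary point is this minimizer. That justifies calling it ``the'' stationary point.

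As a cross-check I would also do the first-order computation. Form the Lagrangian with a normalization multiplier and take its functional derivative in $q'$:
\[
\ell+\lambda\bigl(\log(q'/q_0)+1\bigr)+c=0 .
\]
This gives $q'\propto q_0\,e^{-\ell/\lambda}=q_0\,p^{1/\lambda}$.

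The consequences then follow directly:
\begin{itemize}
\item The target in \eqref{eq:bayes} is $q_\theta(z|o)\,p_\psi(y_p|z,a_p)$, i.e.\ $q^\ast$ with exponent $1$. So \eqref{eq:tta} recovers it when $\lambda=1$.
\item Conversely, if $1/\lambda\neq1$, then $q_0p^{1/\lambda}\propto q_0p$ would force $p^{1/\lambda-1}$ to be constant $q_0$-a.s. That fails whenever the likelihood is nonconstant in $z$, which I would state as the nondegeneracy condition needed for the ``only if'' direction.
\item Setting $\lambda=0.1$ gives the exponent $10$, which is the tempering claim.
\item The statements that the temperature is not a trust region, and that the Gaussian restriction yields only a projection, are interpretive. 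For the projection claim, the identity shows that minimizing over the diagonal-Gaussian family is exactly minimizing $\KL(q'\|q^\ast)$ over that family, i.e.\ the reverse-KL (information) projection of \eqref{eq:bayes}.
\end{itemize}

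The main obstacle is making the ``iff'' and ``stationary point'' language rigorous rather than doing any computation. Three things need care:
\begin{enumerate}
\item Finiteness of $Z_\lambda$ for $\lambda<1$. Powers of Gaussian or log-normal densities are bounded, so this is fine.
\item Restricting to $q'$ with finite objective, so that the identity is not of the form $\infty-\infty$.
\item The nondegeneracy of $p$ in $z$ needed for the converse.
\end{enumerate}
I would state these as mild side conditions and otherwise rely entirely on the KL identity.
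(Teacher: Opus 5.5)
Your proof is correct, but its main step differs from the paper's. The paper's proof is only the formal first-variation computation that you list as a cross-check: it sets the derivative of $\E_{q'}[-\log p]+\lambda\E_{q'}[\log q'/q_0]$ to zero under normalization, obtaining $\log q^\ast=\log q_0+\tfrac1\lambda\log p+\mathrm{const}$. It then simply asserts that restricting to a parametric family yields the information projection.

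Your identity $\E_{q'}[\ell]+\lambda\KL(q'\|q^\ast)$-completion, namely $\E_{q'}[\ell]+\lambda\KL(q'\|q_0)=\lambda\KL(q'\|q^\ast)-\lambda\log Z_\lambda$, gives you more:
\begin{itemize}
\item It shows that $q^\ast$ is the unique global minimizer, not merely a critical point.
\item It turns the paper's Gaussian-restriction sentence into a derivation. Minimizing the objective over any family $\mathcal{Q}$ is literally minimizing $\KL(q'\|q^\ast)$ over $\mathcal{Q}$; with the hard trust region, it is minimization over the trust-region-constrained subfamily.
\item It makes you state hypotheses the paper leaves implicit: $Z_\lambda<\infty$, finiteness of the objective, and, for the ``only if'' direction, that $p_\psi(y_p|z,a_p)$ be nonconstant in $z$ on a set of positive $q_0$-measure.
\end{itemize}
That last condition is genuinely needed. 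If the likelihood is $q_0$-a.s.\ constant, every $\lambda$ reproduces the Bayes target, so the ``iff'' fails. The paper's standing assumption A8 concerns nonconstancy in $a$, not $z$, and does not supply it.

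The paper's route costs one line and suffices to identify the tempered target $q_0p^{1/\lambda}$. Yours is the one that actually justifies calling it ``the'' stationary point and supports the projection claim.

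One small correction: a Gaussian or log-normal density viewed as a function of $z$ is bounded only if the head's predicted scales $\tau_\psi$ and $\varrho_\psi$ stay bounded away from zero. So for $\lambda<1$, finiteness of $Z_\lambda$ is a mild assumption, for example a variance floor in the head, rather than automatic.
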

\begin{IEEEproof}
Setting the first variation of
$\E_{q'}[-\log p]+\lambda\E_{q'}[\log q'/q_0]$ over all distributions to zero
under the normalization constraint gives
$\log q^\ast=\log q_0+\tfrac1\lambda\log p+\mathrm{const}$; restricting to a
parametric family replaces the exact stationary point by its information
projection.
\end{IEEEproof}

Equation~\eqref{eq:bayes} is exact for the auxiliary latent model that treats
$q_\theta(z|o)$ as a prior and $p_\psi(y_p|z,a_p)$ as a likelihood. The
training objective does not establish that $q_\theta$ is a calibrated
posterior over physical state, so this is a generalized or pseudo-Bayesian
adaptation rule rather than a claim of exact physical Bayesian inference.

Sensing~\eqref{eq:ig} and adaptation~\eqref{eq:tta} share the same outcome
model $\bar p_\psi$. Viewpoint selection maximizes the expected reduction of
$U(o)$, whereas~\eqref{eq:tta} minimizes a variational free energy and a
realized probe can even raise the posterior-predictive variance. Thus the two
procedures reuse one outcome model but optimize different functionals. The
gradient of~\eqref{eq:tta} must still flow through sampled
$z$, not only the encoder mean; evaluating the head at $\mu$ makes
$\partial\mathrm{NLL}/\partial\log\sigma^2\equiv0$, so the variance cannot
move. This probe update is not evaluated on the scanned-object corpus; it is
exercised only on the synthetic oracle, and wiring it into closed-loop touch
is future work.

\subsection{Algorithms and Complexity}
\label{sec:algorithms}

Algorithm~\ref{alg:train} is the training loop; Algorithm~\ref{alg:deploy}
is the deployment loop. Both mirror the released implementation line for
line.

\begin{algorithm}[t]
\caption{Training the outcome-bottleneck representation}
\label{alg:train}
\resetalg
\begin{minipage}{\columnwidth}\small\setlength{\parskip}{1.2pt}
\textbf{Input:} scene prior $p(x)$; sensor $p(o|x)$; target action measure
$\rho$; proposal $\nu$; simulator kernel $M_{\mathrm{sim}}$; analytic margin
$m_{\mathrm{box}}$; budget $\beta$; samples $K$; epochs $E$, anneal length
$E_R$\\
\textbf{Output:} $\theta,\psi,\hat q$
\vspace{2pt}\hrule\vspace{3pt}
\AL{initialize $\theta,\psi$; $p_0(z)\leftarrow\mathcal{N}(0,I_d)$}
\AL{\textbf{for} epoch $e=1$ \textbf{to} $E$ \textbf{do}}
\AL{\quad $w_R\leftarrow\min(1,\,e/E_R)$ \hfill\emph{rate anneal}}
\AL{\quad \textbf{for} each minibatch \textbf{do}}
\AL{\qquad sample $x^{(b)}\sim p(x)$;\ \ render $o^{(b)}\sim p(o|x^{(b)})$
\hfill $(B,V,4,H,W)$}
\AL{\qquad sample $a^{(b,i)}\sim\nu(\cdot|x^{(b)})$; set
$w^{(b,i)}\propto d\rho/d\nu(\cdot|x^{(b)})$ and normalize in the batch
\hfill $(B,N_a,7)$}
\AL{\qquad query $(\suff,\mathrm{slip})\sim
M_{\mathrm{sim}}(\cdot|x^{(b)},a^{(b,i)})$ and set
$\mathrm{margin}=m_{\mathrm{box}}(x^{(b)},a^{(b,i)})$}
\AL{\qquad $(\mu,\log\sigma^2)\leftarrow\mathrm{Enc}_\theta(o)$;\ \
$z_k=\mu+\sigma\odot\epsilon_k,\ k\le K$ \hfill $(B,K,d)$}
\AL{\qquad $\hat y\leftarrow\mathrm{Head}_\psi(z,a)$ over the full
$K\times N_a$ cross product \hfill $(B,K,N_a,6)$}
\AL{\qquad $D_j\leftarrow$ weighted $\hat D^{\mathrm{marg}}$ of~\eqref{eq:estimators} per outcome dim.\ $j$}
\AL{\qquad $R\leftarrow$ mean of~\eqref{eq:klclosed} over the batch}
\AL{\qquad $\mathcal{L}\leftarrow\sum_j\beta_j D_j+w_R\,R$;\ \ step
$(\theta,\psi)$ by $\nabla\mathcal{L}$ \hfill Eq.~\eqref{eq:loss}}
\AL{\quad \textbf{end for}}
\AL{\textbf{end for}}
\AL{(optional) train $\alpha_\omega$ by~\eqref{eq:acq} with rendered
look-ahead}
\AL{for formal split-conformal calibration, preselect one action independently
per calibration scene and compute $\hat q$ by~\eqref{eq:scores}--\eqref{eq:predset}}
\end{minipage}
\end{algorithm}

\begin{algorithm}[t]
\caption{Deployment: perceive, filter, act, sense, adapt, or abstain}
\label{alg:deploy}
\resetalg
\begin{minipage}{\columnwidth}\small\setlength{\parskip}{1.2pt}
\textbf{Input:} observation $o$; $\theta,\psi,\omega,\hat q$; $K$, $N_a$,
$\lambda$, $\tau_U$, sensing budget $B_{\mathrm{sense}}$; ACI state $(\alpha,\gamma,\alpha_t)$\\
\textbf{Output:} a pairwise-filtered action, or \textsc{Abstain}
\vspace{2pt}\hrule\vspace{3pt}
\AL{$(\mu,\log\sigma^2)\leftarrow\mathrm{Enc}_\theta(o)$}
\AL{\textbf{loop}}
\AL{\quad $z_1,\ldots,z_K\sim\mathcal{N}(\mu,\mathrm{diag}\,e^{\log\sigma^2})$;\ \
sample $\mathcal{A}_{\mathrm{cand}}\sim\rho$, $|\mathcal{A}_{\mathrm{cand}}|=N_a$}
\AL{\quad compute $s[a],v[a]$ by~\eqref{eq:smean}--\eqref{eq:svar};\ \
$U\leftarrow\mathrm{mean}_a\,v[a]$}
\AL{\quad \textbf{if} $U>\tau_U$ \textbf{and} $B_{\mathrm{sense}}>0$ \textbf{then}
\hfill\emph{active sensing, Eq.~\eqref{eq:ig}}}
\AL{\qquad $b^\ast\leftarrow\arg\max_b\alpha_\omega(o,b)$;\ execute $b^\ast$;
fuse views; $B_{\mathrm{sense}}{\leftarrow}B_{\mathrm{sense}}{-}1$}
\AL{\qquad $(\mu,\log\sigma^2)\leftarrow\mathrm{Enc}_\theta(o\cup o_{b^\ast})$;
\ \textbf{continue}}
\AL{\quad \textbf{end if}}
\AL{\quad $\Afilt\leftarrow\{a:\,1-s[a]\le\hat q\ \wedge\ s[a]>\hat q\}$
\hfill\emph{singleton test, Eq.~\eqref{eq:certset}}}
\AL{\quad \textbf{if} $\Afilt=\varnothing$ \textbf{then return}
\textsc{Abstain}}
\AL{\quad $a^\ast\leftarrow\arg\max_{a\in\Afilt}\,(s[a]-\lambda v[a])$;\
execute $a^\ast$; observe $y_p$}
\AL{\quad \mbox{optional full-feedback level update}:
$\alpha_t\leftarrow\alpha_t+\gamma(\alpha-\mathrm{err}_t)$; refresh $\hat q$
at level $\alpha_t$ \hfill Eq.~\eqref{eq:aci}}
\AL{\quad \textbf{if} task continues \textbf{then} update
$(\mu,\log\sigma^2)$ by~\eqref{eq:tta} and persist it for the next decision}
\AL{\quad \textbf{return} $a^\ast$, $y_p$}
\AL{\textbf{end loop}}
\end{minipage}
\end{algorithm}

\subsubsection{Complexity and measured interface cost}
Per decision, the sensor node runs one encoder pass, $O(F)$ with $F$ the
backbone cost, and transmits $2d=128$ scalars; the decision node evaluates
the head over the $K\times N_a$ cross product, $O(KN_a h)$ with $h=256$.
Table~\ref{tab:interface} reports the budget measured on the released
checkpoint with fp32 PyTorch on two x86-64 CPU cores (median of 60 runs
after 10 warmup iterations): the full perceive-and-score cycle costs
16.2\,ms, a 62\,Hz decision rate with no GPU and no vendor accelerator,
and the transmitted encoder parameters occupy 512\,B against 147.5\,kB for one raw frame.

\begin{table}[t]
\centering
\caption{Measured interface budget (Ours; released $\alpha{=}0.1$
checkpoint, fp32, two x86-64 CPU cores, median of 60 runs). No GPU or
accelerator is used.}
\label{tab:interface}
\begin{tabular}{lr}
\toprule
Quantity & Value \\
\midrule
Encoder parameters & 11.41\,M \\
Head parameters & 0.251\,M \\
Encoder payload $(\mu,\log\sigma^2)$, fp32 / fp16 & 512\,B / 256\,B \\
Single RGB-D frame, fp32 & 147.5\,kB \\
Eight-view set, fp32 & 1.18\,MB \\
Encoder forward, 1 / 2 threads & 25.6 / 14.9\,ms \\
Head, $K{=}32\times N_a{=}8$, 1 / 2 threads & 2.1 / 1.3\,ms \\
Perceive-and-score cycle, 1 / 2 threads & 27.7 / 16.2\,ms \\
\bottomrule
\end{tabular}
\end{table}
Optional sensing adds one amortized forward pass per candidate viewpoint;
optional adaptation adds $O(k)$ head evaluations for $k$ gradient steps,
with the encoder untouched. There is no meshing, no reconstruction, no
dynamics rollout, and no reinforcement-learning loop anywhere in training
or deployment. Against a reconstruct-then-plan stack the asymptotic saving
is structural: the geometry that pipeline must estimate, store, and ship
does not exist here, and the entire perception-to-decision traffic is the
encoder parameter pair.

\begin{figure*}[!t]
\centering
\includegraphics[width=0.92\textwidth]{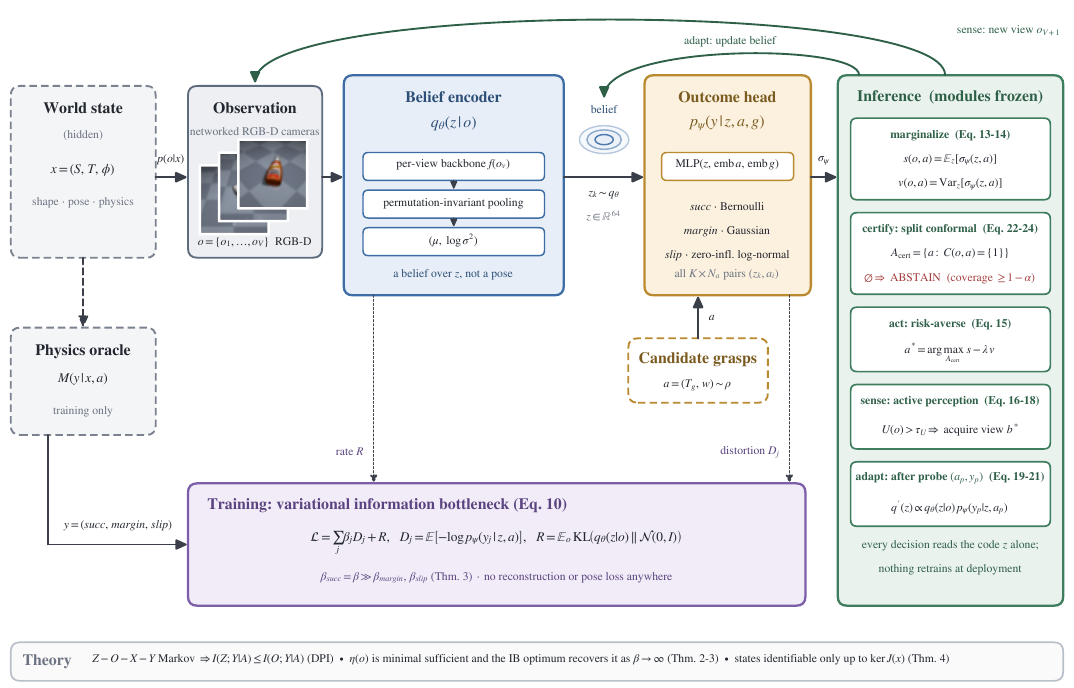}
\caption{The outcome-bottleneck perception interface. A sensor node
encodes the RGB-D observation (or a fused view set) into a stochastic
distribution $q_\theta(z|o)$; its 128 scalar parameters are the only traffic
across the perception-to-decision boundary. The decision node
propagates $K$ encoder samples through the outcome head
$p_\psi(y|z,a)$ to obtain $s(o,a)$ and $v(o,a)$ per candidate grasp, from
which four capabilities derive without further learning: risk-averse
selection~\eqref{eq:selection}, singleton-filtered abstention~\eqref{eq:certset}, ambiguity-triggered viewpoint acquisition~\eqref{eq:ig}, and probe assimilation~\eqref{eq:tta}.}
\label{fig:framework}
\end{figure*}

\section{Experimental Setup}
\label{sec:setup}

\subsection{A Two-Tier Physics Stack that Measures the Reconstruction Gap}
\label{sec:oracle}
Validating the thesis requires measuring the gap between the geometry a
perception pipeline recovers and the physics that happens, with everything
else held fixed. Two distinct maps are defined over the \emph{same} states
and actions:
\begin{itemize}[leftmargin=1.4em]
\item the \textbf{analytic tier} produces the deterministic margin
$m_{\mathrm{box}}(x,a)$ on an oriented bounding box fitted to
the object, approximately the fidelity a pose-and-shape pipeline delivers,
and scores each grasp by a differentiable Ferrari--Canny
$\epsilon$-quality~\cite{ferrari1992planning}: contacts on the box surface,
linearized friction cones with 8 generators, wrenches
$w=[f;(p-c)\times f]\in\R^6$ with torques scaled by a characteristic
length, and $\epsilon$ evaluated as the inradius of the wrench hull via its
support function on 128 fixed directions plus the 12 canonical axes;
\item the \textbf{simulation tier} defines
$M_{\mathrm{sim}}(\suff,\mathrm{slip}|x,a)$ by rolling the identical grasp out
in MuJoCo~\cite{todorov2012} against the object's true convex decomposition
(roughly 20 hulls per object, shipped with the asset), with the scene's
sampled mass, friction, and center of mass: jaws close for 300 steps at
2\,ms, contacts settle for 100, and the object is lifted 0.12\,m at
0.15\,m/s and held for 400 steps; success requires clearing 0.05\,m, and
slip is the realized drop during the hold.
\end{itemize}
For the scanned-corpus training loss, the generic task kernel $M$ of
Section~\ref{sec:spaces} is instantiated as the hybrid law
\begin{equation}
\begin{aligned}
M_{\mathrm{train}}(dy|x,a)={}&
M_{\mathrm{sim}}(d\suff,d\mathrm{slip}|x,a)\\
&\cdot\delta_{m_{\mathrm{box}}(x,a)}(d\mathrm{margin}).
\end{aligned}
\label{eq:hybridkernel}
\end{equation}
Thus success and slip are simulated physical outcomes, while the deliberately
downweighted margin coordinate is an analytic auxiliary label. The
identifiability experiment states separately whether it differentiates the
synthetic task kernel or the analytic surrogate.
On a parametric box the two tiers see the same geometry and the gap is
identically zero, which is why boxes cannot test the claim; on a scanned
ketchup bottle the bounding box is a self-consistent but incorrect surface,
which exposes the reconstruction gap. The two tiers disagree even on
simple geometry: over 320 matched grasps the analytic tier declares success
at rate 0.68 against the simulator's 0.25, agreement is 0.42, the analytic
false-positive rate is 0.74, and the quasi-static slip estimate is
uncorrelated with dynamic slip ($r=-0.02$). Training the encoder on the
analytic tier alone would therefore teach it to be confidently wrong. The
dominant success and slip supervision therefore comes from the simulation
tier, while the analytic margin remains a low-weight auxiliary label and the
same analytic score serves as the \emph{baseline}. The evaluated
reconstruct-then-plan pipeline uses an oriented bounding box scored by one differentiable
Ferrari--Canny approximation; it is representative of, but does not stand in
for, every geometry-certifying pipeline, and the conclusions below are stated
against this baseline on this simulator and candidate set rather than against
geometric perception in general.

\subsection{Corpus, Architecture, and Training}
\label{sec:corpus}
The object set is the 13 scanned grocery objects shipped with the LIBERO
benchmark~\cite{liu2023libero}: 6 with a single-box collision hull (butter,
cookies, cream cheese, chocolate pudding, popcorn, BBQ sauce) and 7 curved
(cans, bottles, cartons). Scenes are generated by dropping an object with
randomized yaw and position onto a plane and letting it settle; friction is
log-normal with $\log\mu_f\sim\mathcal{N}(\log 0.8,0.3^2)$, mass
$\log m\sim\mathcal{N}(\log 0.25,0.5^2)$, and the center of mass is
displaced inside the hull, so scenes that render identically grasp
differently. Each scene carries 8 candidate grasps from a
boundary-focused proposal and an 8-camera ring of $96\times96$ RGB-D
views. Fold sizes are 8{,}000/1{,}000/2{,}000/2{,}000 scenes
(train/validation/calibration/test); the calibration and test folds
contribute 16{,}000 grasp-outcome pairs each, of which 11{,}979 test grasps
are kinematically executable and enter predictor evaluation. Calibration
and test folds are drawn i.i.d.\ from the same scene prior with disjoint
seeds. Thus A5 holds when one outcome-independent action is preselected per
scene as in Theorem~\ref{thm:coverage}; it does not hold for the flattened
all-pair score array used by the scanned-corpus diagnostic. Deployment drift
is evaluated only as the full-feedback stress test associated with
Proposition~\ref{prop:aci}. The realized
success rate is 0.597, with per-object base rates spanning 0.043 to 0.999,
a spread whose consequences for metric choice
Section~\ref{sec:protocol} addresses.

The encoder is a ResNet-18~\cite{he2016} with a 4-channel stem
(ImageNet-initialized RGB filters, depth initialized from the mean RGB
filter), global pooling to 256 features, and a linear head to
$(\mu,\log\sigma^2)\in\R^{128}$, $d=64$; view sets are fused by a
permutation-invariant mean over per-view features. The outcome head embeds
the 7-D action to 256 units, modulates the embedding by
feature-wise linear transformation conditioned on $z$ so that the
action-pose product enters structurally rather than being discovered, and
emits the six parameters of~\eqref{eq:headfactor}. Training uses AdamW
(learning rate $5\times10^{-4}$, weight decay $10^{-4}$), batch size 128,
60 epochs, bf16 autocast with the loss in fp32, gradient clipping at 1.0,
$K=8$ encoder samples through the marginal estimator of
Proposition~\ref{prop:marginal}, a 15-epoch rate anneal, and a 10-epoch
deterministic warmup; simulator and solver settings are pinned in
configuration and logged with every run. Unless stated otherwise
$\beta_{\suff}=300$,
$\beta_{\mathrm{m}}=\beta_{\mathrm{s}}=10^{-2}$; the pairwise-filter
experiments use $\beta_{\suff}=30$, and the frontier sweeps
$\beta_{\suff}\in\{1,3,10,30,100\}$. Every number in
Section~\ref{sec:results} is written by the released experiment scripts
into JSON and typeset from there; the implementation carries 97 unit and
regression tests, including named regressions for each defect analyzed in
Sections~\ref{sec:objective}--\ref{sec:inference}.

\subsection{Synthetic Validation Oracle}
\label{sec:synthetic}
The diagnostic is first evaluated where the analytic answer is known. The
kernel $M$ is instantiated as a closed-form
synthetic operator over a 6-D state with vector observations, constructed
so that its finite-probe indistinguishability class, and hence $\ker J_N(x)$
($\mathrm{rank}\,J_N=3$ everywhere), is known analytically. An MLP encoder
($d=16$, $\beta_{\suff}=20$) and the identical head, calibration, and
inference stack run end to end on it in seconds, which pins pipeline
defects before any conclusion is drawn from the simulator; its
64{,}000-point test fold also sharpens the coverage estimates of
Section~\ref{sec:coverage-results}.

\subsection{Evaluation Protocol}
\label{sec:protocol}
\subsubsection{Metrics}
Predictive validity is AUC against the realized lift outcome. Deployment
value is the success rate of \emph{executed} grasps as a function of the
act rate (the fraction of scenes the system commits to), plus top-1 success
when always acting. Pairwise-filter quality is empirical marginal coverage
$\Prob(\suff\in C(o,a))$ against the target $1-\alpha$, singleton precision
$\Prob(\suff{=}1\,|\,a\in\Afilt)$, abstention rate, and retained-action
fraction; these are distinct estimands and are reported separately.
Theorem~\ref{thm:coverage} applies only when calibration and test each use one
prespecified random action per independent scene. The flattened scanned-corpus
all-pair numbers do not satisfy that premise and are labeled descriptive.
Representation cost is the rate $R$ in nats and the interface size in
scalars.

\subsubsection{Pooled versus within-scene AUC}
With base rates spanning 0.043 to 0.999, a pooled AUC rewards a model for
recognizing \emph{which object} is present rather than \emph{which grasp}
will hold: a predictor carrying object identity and nothing else already
pools to $\approx0.72$ here. Accordingly, both pooled and within-scene AUC
are reported. Within-scene AUC ranks only the candidate grasps of one settled
scene and is averaged over scenes containing at least one success and one
failure, so object recognition alone cannot improve it. The two metrics can order models
oppositely (a Simpson effect we observed in an early head whose pooled
score exceeded both of its strata), and ablation conclusions in
Section~\ref{sec:ablations} rest on the within-scene metric exclusively.

\subsubsection{Baselines and privileged state}
The analytic Ferrari--Canny score on the fitted bounding box is the
reconstruct-then-plan surrogate, given the same candidate actions as ours.
Random selection and the majority class calibrate the floor. An MLP trained
on the \emph{true} 14-D physical state~\eqref{eq:state}, an input no
perception system can access, is a privileged baseline: its
within-scene AUC is 0.685. Because the 14-vector omits mesh identity and
detailed shape, it is not an upper bound on observation-based prediction.
Learned point-cloud grasp networks~\cite{fang2020graspnet,sundermeyer2021contact}
are not included because their action samplers, gripper models, and training
corpora differ from
ours in ways that confound the single question this evaluation isolates,
namely what the \emph{estimand} of perception should be under a fixed
candidate set and controller. Section~\ref{sec:limitations} returns to
this scope decision.

\subsubsection{Statistical reporting}
Actions sharing a scene are clustered, so action-level binomial and AUC
standard errors are not used. The reported tables give point estimates from
the designated primary checkpoint unless marked otherwise; valid uncertainty
requires a scene-clustered bootstrap or an equivalent cluster-respecting
analysis, which is not available in the present artifact. Distortions are
reported with the standard error over test batches. The full model is
retrained end to end with three independent seeds:
within-scene AUC $0.635\pm0.011$, pooled AUC $0.875\pm0.002$, top-1
success $0.692\pm0.008$, selective precision $0.976\pm0.008$ (mean
$\pm$ sd), with descriptive all-pair coverage 0.900--0.901 on every retraining.
Ablation variants are trained once; their deltas are interpreted against this
measured noise floor without significance claims below it.

\section{Results}
\label{sec:results}

\subsection{E1: Does Reconstructed-Geometry Quality Predict the Lift?}
\label{sec:e1}
\emph{Hypothesis.} If the reconstruct-then-plan estimand were sound, a
wrench-space quality computed on the reconstruction would rank grasps by
their realized outcome. The sufficiency thesis predicts instead that the
score is informative only where the reconstruction happens to be correct,
and uninformative or worse elsewhere.

\begin{table}[t]
\centering
\caption{Predictive validity against the realized lift outcome. 11{,}979
executable grasps on 13 scanned objects; base success rate 0.597. The
privileged-state row is an MLP on the 14-vector~\eqref{eq:state}, an input
unavailable to any perception system. Because that vector omits object
identity and mesh, it does not by itself determine the simulator outcome
across the corpus, so it is a baseline, not a true upper bound;
residual rollout stochasticity bounds all predictors below 1.}
\label{tab:proxy}
\small
\begin{tabular}{lccc}
\toprule
Predictor & Pooled & Within-scene & Best acc.\\
\midrule
Chance / majority & 0.500 & 0.500 & 0.597 \\
Ferrari--Canny (recon.) & 0.542 & 0.548 & 0.612 \\
\textbf{Ours} & \textbf{0.876} & \textbf{0.639} & \textbf{0.784} \\
\midrule
\emph{Privileged state} & -- & \emph{0.685} & -- \\
\bottomrule
\end{tabular}
\end{table}

\emph{Data.} Table~\ref{tab:proxy}: the analytic proxy attains 0.542 pooled
and 0.548 within-scene AUC, against 0.876 and 0.639 for ours; the
privileged-state baseline reaches 0.685 within-scene. The representation thus
recovers most of what a model with direct access to pose, size, friction,
mass and center of mass achieves, from pixels alone, while the proxy stays
near chance. No percentage of a ceiling is reported because the
14-vector is not a sufficient state for the corpus (it lacks shape and
identity) and so is not a valid upper bound. No threshold on the proxy yields
an accuracy meaningfully above the majority class (0.612 vs.\ 0.597). These
results establish little ranking value for the realized outcome; they do not
by themselves assess probability calibration.

\begin{figure}[t]
\centering
\includegraphics[width=\columnwidth]{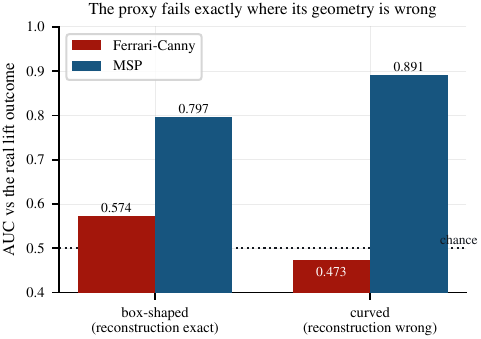}
\caption{Pooled AUC against the realized lift, stratified by whether the
bounding-box reconstruction is exact (box-hulled objects) or wrong (curved
objects). The analytic score degrades to below chance where its
geometry is hallucinated; the outcome-trained representation improves there.}
\label{fig:geosplit}
\end{figure}

\emph{The control group.} Six of the thirteen objects \emph{are} boxes: for
them the bounding-box reconstruction is exact and no surface is
hallucinated, so the thesis predicts the proxy should work there if it
works anywhere. Fig.~\ref{fig:geosplit} confirms the prediction in both directions: on box-hulled objects the proxy
reaches 0.574, weak but above chance, while on curved objects it falls to
0.473, below chance, meaning the grasps it prefers are systematically the
grasps that fail. Ours moves the opposite way, 0.797 on boxes and 0.891 on
curved objects, because curvature is visible in RGB-D and informative
about outcomes even when no box can represent it. On a parametric-box
benchmark this experiment is uninformative because reconstruction and truth
coincide. Scanned meshes are therefore necessary to expose the reconstruction
gap.

A cell that ships geometry and certifies force closure on it inherits a
score whose errors concentrate on the objects where geometry is hardest, and
no link bandwidth or reconstruction compute repairs an estimand
anti-correlated with the outcome on curved stock, at least for this
bounding-box reconstruction and this analytic metric; a stronger learned
reconstruction under the same controller is the control this result invites
next.

\subsection{E2: Selective Execution}
\label{sec:selective}
\emph{Hypothesis.} If a confidence score tracks realized outcomes, executed
success should increase as the act rate decreases. A reversed trend indicates
that selectivity concentrates failures.

For each method, the scene confidence is the score of its own selected
top-ranked action: $s-\lambda v$ for ours and the Ferrari--Canny value for the
analytic baseline. The 25\% operating point retains the top quartile of scenes
under that method-specific score. The table reports point estimates; a
scene-clustered confidence interval is not available in the present artifact.

\begin{table}[t]
\centering
\caption{Deployment value: the system ranks a scene's candidates, executes
its favorite, and commits only on its most confident scenes. Entries are
success rates of executed grasps over 1{,}997 test scenes.}
\label{tab:selective}
\begin{tabular}{lcc}
\toprule
Act rate & Ferrari--Canny & \textbf{Ours} \\
\midrule
0.25 (selective) & 0.503 & \textbf{0.984} \\
1.00 (always act) & 0.665 & \textbf{0.692} \\
\midrule
Random grasp & \multicolumn{2}{c}{0.623} \\
\bottomrule
\end{tabular}
\end{table}

\begin{figure}[t]
\centering
\includegraphics[width=\columnwidth]{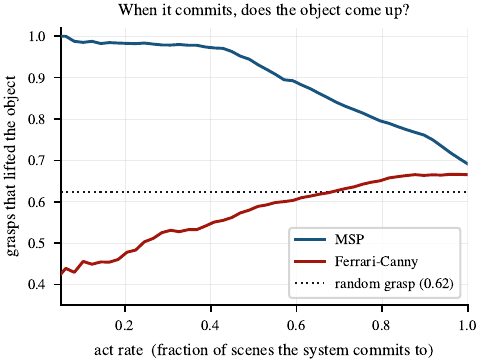}
\caption{Risk-coverage behavior. Success of executed grasps as the act rate
sweeps from selective to always-on. The proxy's curve falls as it grows more
selective, crossing below the random-grasp floor; ours rises to 0.984 at a
25\% act rate.}
\label{fig:riskcov}
\end{figure}

\emph{Data.} Table~\ref{tab:selective}. Always
acting, both systems sit near the observed top-1 range
(0.665 proxy, 0.692 ours, 0.623 random). The regimes separate under
selectivity: at a 25\% act rate the proxy's executed grasps succeed at
0.503, \emph{below} picking a grasp at random, while ours reach 0.984. The
proxy's confidence ordering is inverted on the curved half of the corpus
(Fig.~\ref{fig:geosplit}), so conditioning on its confidence selects for
failure.

\emph{Deployment implication.} Abstention, deferral, and handoff rely on a
confidence ordering that improves outcomes as the act rate decreases. At the
reported 25\% operating point, the learned score reaches 0.984 success while
the analytic score falls below random selection.

\subsection{E3: Pairwise Filtering and Coverage Diagnostics}
\label{sec:coverage-results}
\emph{Hypothesis.} Theorem~\ref{thm:coverage} promises marginal coverage for
one outcome-independently selected action from each independent scene. The
synthetic oracle uses independent test points and directly checks that
setting. The scanned-corpus experiment predates the corrected scene-level
calibration protocol and pools all eight actions per scene. It is retained as
a clustered empirical diagnostic, not as a distribution-free validation.

\begin{table}[t]
\centering
\caption{Empirical singleton filtering on the scanned-object corpus
($\beta_{\suff}=30$; 16{,}000 action records over $\approx$2{,}000 scenes).
Because calibration pooled eight correlated actions per scene, these values
are descriptive and Theorem~\ref{thm:coverage} does not apply. Singleton
precision is a distinct selection-conditional estimand.}
\label{tab:coverage}
\begin{tabular}{lccc}
\toprule
 & $\alpha=0.05$ & $\alpha=0.10$ & $\alpha=0.20$ \\
\midrule
Target coverage $1-\alpha$ & 0.950 & 0.900 & 0.800 \\
Pairwise coverage (\textbf{Ours}) & \textbf{0.946} & \textbf{0.897} &
\textbf{0.800} \\
Singleton precision & 0.792 & 0.744 & 0.673 \\
Abstention rate & 0.823 & 0.743 & 0.598 \\
Retained action fraction & 0.169 & 0.235 & 0.373 \\
\bottomrule
\end{tabular}

\end{table}

\begin{figure}[t]
\centering
\includegraphics[width=\columnwidth]{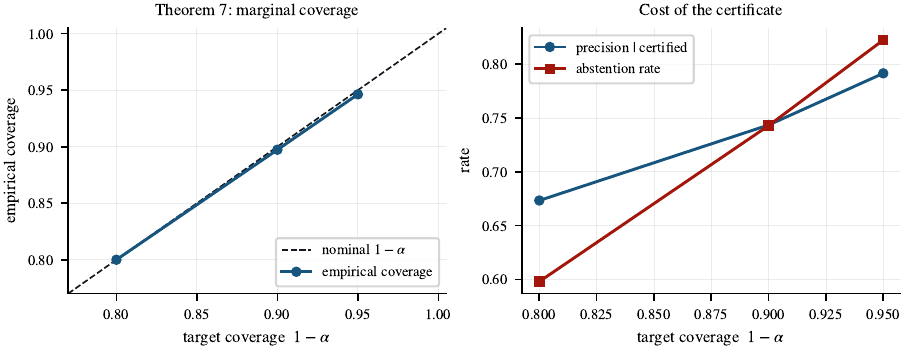}
\caption{Empirical against target coverage on the synthetic oracle
(64{,}000 test points per level, where the outcome kernel is the analytic
map with a closed-form indistinguishability class). The split-conformal sets
track the diagonal at every level; the right panel relates the result to the
KL rate surrogate.}
\label{fig:coverage}
\end{figure}

\emph{Data.} Table~\ref{tab:coverage} gives descriptive all-pair coverage
0.946/0.897/0.800 against targets 0.950/0.900/0.800. These close point
estimates do not establish finite-sample validity because calibration and test
scores are clustered by scene. On the independently sampled synthetic oracle
(Section~\ref{sec:synthetic}), the construction gives
0.979/0.949/0.901/0.802/0.700 against targets
0.98/0.95/0.90/0.80/0.70. In the scanned diagnostic, singleton precision is
observed to rise with the target coverage level; this monotonicity is empirical,
not required by conformal validity. At $\alpha=0.1$ the filter retains 23.5\%
of candidate actions and abstains on 74.3\% of scenes. Singleton precision is
selection-conditional and is not controlled by Theorem~\ref{thm:coverage}.

\emph{On the abstention rate.} A photograph does not reveal friction, mass,
or center of mass, and those variables influence slip and torque failure. The
74\% empirical abstention rate is consistent with substantial unresolved
ambiguity, but the clustered experiment is not a 90\% action-level safety
certificate. Active touch
(Section~\ref{sec:active}) is intended to reduce this ambiguity rather than
executing anyway on an analytic score whose false-positive rate is 0.74
(Section~\ref{sec:oracle}). Raising the rate lowers the price: at
$\beta_{\suff}=100$ the same nominal filter level gives abstention 0.367 and
singleton precision 0.848 (Table~\ref{tab:frontier}).

\emph{Empirical filtering under stream drift.} Deployment violates
exchangeability. The anchored recursion is stress-tested on the released checkpoint by
streaming the 16{,}000 test decisions sorted by object identity into thirteen
homogeneous segments whose base rates sweep 0.043 to 0.999, a severe local
shift. This is a \emph{full-feedback} test: every decision supplies a label,
which the deployed policy would not have on abstained scenes, so it bounds the best case of selective
feedback rather than reproducing it. Reordering does not change the empirical
all-pair marginal average, which remains 0.90; what the fixed filter loses is
\emph{local} coverage, collapsing to 0.702 over the worst 500-decision
window and 0.720 on the worst object segment. The anchored recursion at
$\gamma=0.01$ recovers 4.2 points of worst-window and 11.5 points of
worst-object coverage and compresses per-object dispersion from 0.111 to
0.035, at identical marginal coverage. It does not fully restore the nominal level
inside the worst window because adaptation lags each segment boundary by
$O(1/\gamma)$ decisions. The reported values retain this shortfall.

\begin{table}[t]
\centering
\caption{Empirical coverage under an object-sorted stream (Ours; 16{,}000
decisions sorted by object, target 0.90), a full-feedback stress test:
every decision supplies a label, unlike the selective feedback of a deployed
policy that abstains. The fixed column uses the calibration-fold quantile; the
ACI column uses~\eqref{eq:aci} with $\gamma=0.01$. Marginal coverage is
order-invariant; local coverage is not.}
\label{tab:drift}
\begin{tabular}{lcc}
\toprule
 & Fixed $\hat q$ & ACI (anchored) \\
\midrule
Marginal coverage & 0.899 & \textbf{0.900} \\
Worst 500-decision window & 0.702 & \textbf{0.744} \\
Worst object segment & 0.720 & \textbf{0.834} \\
Per-object dispersion (std) & 0.111 & \textbf{0.035} \\
\bottomrule
\end{tabular}
\end{table}

\subsection{E4: Identifiability, Predicted vs.\ Measured}
\label{sec:ident-results}
\emph{Hypothesis.} Theorem~\ref{thm:identifiability} predicts that the local
level-set tangent space for the selected probes equals $\ker J_N(x)$. Two
maps must be kept distinct. On the \emph{synthetic oracle}, $M$ \emph{is} the
analytic map, so differentiating $\Phi_N$ and comparing to the closed-form
$\ker J_N$ is a self-consistent verification of the diagnostic and of the
theorem for that $M$. On the \emph{scanned objects}, the analytic
bounding-box tier is differentiated, whereas the physical outcome kernel is the simulator
on the true mesh; the two are different maps, so the scanned-object numbers
test whether the analytic $\ker J_N$ predicts the simulator's locally
invariant directions, not the theorem's $J_N$ for the true physics. Both
experiments are reported with their distinct interpretations.

\begin{table}[t]
\centering
\caption{Finite-probe identifiability per object. $\dim\ker J_N$ is the
dimension of the analytic probe-map level-set tangent space at the settled state; the principal
angle compares the analytic-tier $\ker J_N(x)$ against the simulator's
independently measured outcome-invariant subspace on the true mesh. This is a
surrogate-agreement measurement, not a computation of the theorem's $J_N$ for
the simulator kernel (see text). For 6 of 13 objects $J_N$ has full rank and
the local level set is zero-dimensional (Remark~\ref{rem:vacuity}). The comparison uses the
fixed probe set, perturbation scale, and rank tolerance of the companion
diagnostic configuration. Those numerical settings are not included in the
present manuscript artifact, so reproducing the angles requires that
configuration.}
\label{tab:ident}
\begin{tabular}{lrrr}
\toprule
Object & $\mathrm{rank}\,J_N$ & $\dim\ker J_N$ & Max angle [$^\circ$] \\
\midrule
alphabet soup & 12 & 2 & 0.05 \\
BBQ sauce & 14 & 0 & -- \\
butter & 14 & 0 & -- \\
chocolate pudding & 14 & 0 & -- \\
cookies & 14 & 0 & -- \\
cream cheese & 14 & 0 & -- \\
ketchup & 13 & 1 & 0.05 \\
macaroni and cheese & 13 & 1 & 0.04 \\
milk & 11 & 3 & 0.21 \\
orange juice & 12 & 2 & 0.55 \\
popcorn & 14 & 0 & -- \\
salad dressing & 12 & 2 & 0.06 \\
tomato sauce & 12 & 2 & 0.06 \\
\bottomrule
\end{tabular}
\end{table}

\begin{figure}[t]
\centering
\includegraphics[width=\columnwidth]{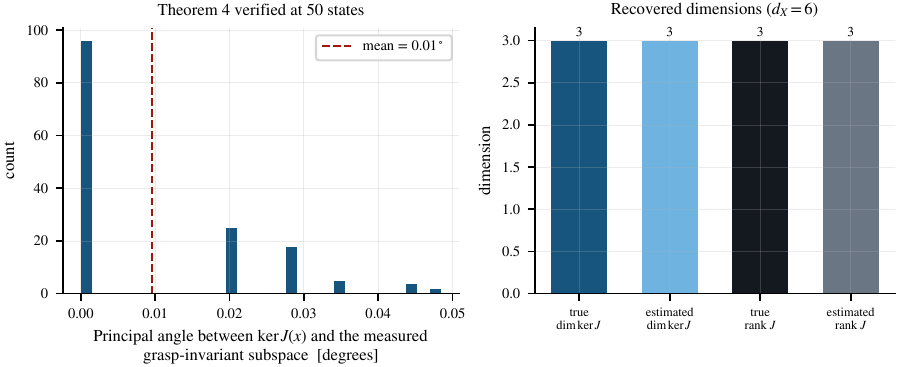}
\caption{Verification of Theorem~\ref{thm:identifiability} on the
synthetic oracle, whose indistinguishability class is known in closed form:
principal angles between the predicted $\ker J_N(x)$ and the measured
invariant subspace concentrate near zero (mean $0.013^{\circ}$, max
$0.056^{\circ}$ over 50 states), and the recovered rank and null dimensions
match the ground truth exactly.}
\label{fig:ident}
\end{figure}

\emph{Data.} On the synthetic oracle of Section~\ref{sec:synthetic}, where
$M$ is the analytic map and the theorem applies exactly, the predicted and
measured subspaces agree to a mean principal angle of $0.013^{\circ}$ and a
maximum of $0.056^{\circ}$ over 50 states, with rank and null dimension
recovered exactly (Fig.~\ref{fig:ident}). This verifies the theorem and the
diagnostic. On the scanned corpus (Table~\ref{tab:ident}), the analytic-tier
$\ker J_N$ agrees with the simulator's measured invariant subspace to below
$0.56^{\circ}$ on every rank-deficient object, and the null directions are the
physically expected ones, rotation about a can's symmetry axis and
combinations the gripper never loads. This supports the analytic surrogate as
a predictor of the simulator's local invariances on these
objects, not as a verification that residual ambiguity equals $\ker J_N$ for the
true physics, which would require differentiating an emulator of the simulator
validated against held-out rollouts. The six box-hulled objects have full-rank
$J_N$: the analytic finite-probe level set is locally zero-dimensional, though
isolated or unprobed ambiguities may remain.

\emph{Systemic implication.} The analytic $J_N$ is computable before
deployment and can flag local directions that the chosen surrogate probe map
does not resolve. On scanned objects, agreement with measured simulator
invariances is empirical and does not prove that a vision-only endpoint can
never recover those coordinates. Corollary~\ref{cor:metrics} motivates
reporting the finite-probe row/null decomposition alongside full-state pose
metrics.

\subsection{E5: The Rate-Distortion Frontier}
\label{sec:frontier}
\emph{Hypothesis.} Raising $\beta_{\suff}$ in the implemented objective should
emphasize success log-loss at the price of a larger KL rate surrogate. This is
an empirical optimization hypothesis: Theorem~\ref{thm:ib} concerns the exact
constrained ideal, and Theorem~\ref{thm:pinsker} concerns an unmeasured true
information deficit.

\begin{table}[t]
\centering
\caption{Rate-distortion frontier over the sufficiency budget (Ours;
16{,}000 test points per row). $\beta$ multiplies the success relevance
term, so larger $\beta$ emphasizes success prediction; the unweighted total $\sum_j D_j$
is dominated by the deliberately sacrificed margin and slip likelihoods and
need not fall. Descriptive all-pair coverage remains near 0.90 along the
frontier, without a finite-sample claim for the clustered calibration.}
\label{tab:frontier}
\begin{tabular}{rccccc}
\toprule
$\beta$ & $R$ [nats] & $D_{\suff}$ & $\sum_j D_j$ & Coverage &
Abstention \\
\midrule
1 & 0.004 & 0.656 & 0.725 & 0.896 & 0.643 \\
3 & 0.088 & 0.622 & 0.767 & 0.892 & 0.674 \\
10 & 0.280 & 0.598 & 0.870 & 0.904 & 0.736 \\
30 & 0.534 & 0.582 & 0.993 & 0.897 & 0.743 \\
100 & 3.286 & \textbf{0.412} & 0.956 & 0.905 & \textbf{0.367} \\
\bottomrule
\end{tabular}
\end{table}

\begin{figure}[t]
\centering
\includegraphics[width=\columnwidth]{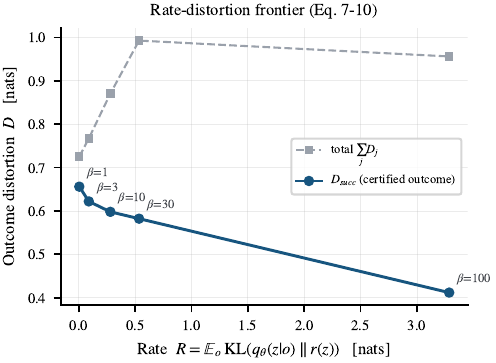}
\caption{The measured rate-distortion frontier~\eqref{eq:loss}. Each point
is one trained model; the abscissa is the KL rate surrogate $R$
(Section~\ref{sec:model}), not a coded wire length, and the ordinate is
outcome distortion.}
\label{fig:frontier}
\end{figure}

\emph{Data.} Table~\ref{tab:frontier}. The
frontier shows the intended empirical tradeoff: $D_{\suff}$ falls monotonically,
0.656 to 0.412 nats, as the rate rises from 0.004 to 3.29 nats. The
descriptive all-pair coverage stays between 0.892 and 0.905 across the
operating range, but the operating price changes: the extra 2.75 nats
between $\beta=30$ and $\beta=100$ halve the abstention rate (0.743 to
0.367) and lift singleton precision from 0.744 to 0.848. In this experiment,
the rate surrogate is therefore associated with how often the filter retains
an action; it is not a proved coding-rate or coverage law.

\emph{Interface accounting.} Two quantities must be separated. The
\emph{operational payload} is what the sensor node transmits: 128
floating-point parameters, 512\,B in fp32 or 256\,B in fp16, and this is a
measured, deployed number. The \emph{KL rate surrogate} $R$ is the
regularizer of~\eqref{eq:loss}, an upper bound on $I(Z;O)$; it is 0.53 nats
at $\beta{=}30$ and 3.29 nats at $\beta{=}100$. It can motivate a
relative-entropy coding design against a shared prior~\cite{havasi2019}, but no
such coder, its quantization, or its finite-block overhead is implemented or
timed here. The value $R$ is not treated as an achieved or finite-block code length;
the demonstrated operational reduction is the
512\,B payload against 147.5\,kB for one frame ($288\times$) and 1.18\,MB for
the 8-view set ($2304\times$).

\subsection{E6: Active Perception with Independent Scoring}
\label{sec:active-results}
\emph{Hypothesis.} Acquiring the viewpoint that maximizes~\eqref{eq:ig}
should reduce ambiguity by more than a random second view; and because the
selector and the scorer share Monte-Carlo noise, selecting and scoring on
the same estimate must inflate the apparent gain (a winner's curse), so an
unbiased evaluation selects on one estimate and scores on an independent
one.

\emph{Data.} A second view of any kind reduces ambiguity $U(o)$ by 13.8\%;
choosing it by~\eqref{eq:ig}, scoring with an independent Monte-Carlo
estimate, buys 16.7\%. Reusing the selection draw produces a downward-biased
estimate of fused ambiguity, so that statistic is discarded. The
independent estimate is the reported result. Its modest size shows that most of the value
of a second view is in taking one at all, and the residual ambiguity after
both views is dominated by unobservable physics that no viewpoint resolves,
the regime where probing (Section~\ref{sec:active}) rather than looking is
the intended next acquisition. These are point estimates from the reported
run; replicated Monte Carlo or scene-level uncertainty is not available in
the present artifact.

\subsection{E7: Combined Deterministic and No-Rate Ablation}
\label{sec:baseline}
\emph{Hypothesis.} A deterministic point-estimate system provides a
capacity-matched comparison to the stochastic encoder. The implemented
variant uses the same backbone, head, optimizer, and schedule, but sets
$z=\mu$ and removes the KL term. It is therefore a combined ablation of
sampling and rate regularization, not an isolation of either mechanism.

\emph{Data.} The point-estimate baseline (within-scene AUC 0.527, pooled
0.705, top-1 0.633, selective precision 0.651, against 0.639/0.876/0.692/0.984
for the full model) sits at chance on the metric that matters: its action-response standard deviation is
0.0008 against 0.221 for the full model, meaning it emits one number per
scene and ignores the action outright; its selective precision at a 25\%
act rate is 0.651, barely above executing random grasps (0.623), against
0.984 for the full model. Its pooled AUC of 0.705 sits at the
recognition-only level, the signature of the collapse of
Section~\ref{sec:objective}: under the combined change, the head predicts the
per-object base rate. Its descriptive all-pair coverage is 0.897 at a nominal
0.90 filter level, but it cannot rank actions sharply. The experiment shows
that the full training configuration matters; matched variants with sampling
and KL regularization changed separately are required to attribute the gain
to either component.

\subsection{E8: Ablations}
\label{sec:ablations}
\emph{Hypothesis.} Each architectural commitment of
Sections~\ref{sec:objective}--\ref{sec:inference} was made against a
measured failure; removing it should reproduce that failure. All
comparisons use the within-scene AUC (Section~\ref{sec:protocol});
\emph{action response} is the standard deviation of $s(o,a)$ across a
scene's candidates, whose collapse toward zero means the model has stopped
ranking grasps and emits one number per scene.

\begin{table}[t]
\centering
\caption{Ablations on the scanned-object corpus
($\beta_{\suff}=300$ except where the budget itself is ablated). The
pooled AUC is reported to expose its failure as a referee: it scores the
uniform-budget and no-perception variants near 0.7 although both have
stopped ranking grasps.}
\label{tab:ablations}
\footnotesize
\setlength{\tabcolsep}{3pt}
\begin{tabular}{lccc}
\toprule
Variant & Within & Pooled & Resp. \\
\midrule
\textbf{Ours (full)} & \textbf{0.639} & 0.875 & 0.221 \\
Uniform outcome budget & 0.511 & 0.710 & 0.002 \\
Constant $z$ & 0.506 & 0.615 & 0.031 \\
$K{=}1$ sample & 0.602 & 0.844 & 0.174 \\
$d{=}16$ & 0.634 & 0.873 & 0.219 \\
$d{=}32$ & 0.587 & 0.846 & 0.185 \\
\bottomrule
\end{tabular}
\end{table}

\emph{Data.} Table~\ref{tab:ablations}.
\emph{(i) The budget.} Replacing the per-dimension budget with a uniform
$\beta$ hands capacity out in proportion to the accidental magnitude of
each likelihood; the Bernoulli success term, the only outcome the decision
rule reads, is the smallest of the three, and the model stops attending to
the action: action response collapses from 0.221 to 0.002 and the
within-scene AUC falls to 0.511, chance. On this corpus the margin coordinate is the
Ferrari--Canny value computed on the bounding box, so a uniform budget
spends the representation's capacity fitting precisely the proxy
this paper shows is uninformative.
\emph{(ii) Perception.} Replacing the representation with a constant (the head can
still exploit action priors) gives 0.506 within-scene, confirming the head
does not bypass $z$; on the synthetic oracle the same ablation moves the
\emph{total} outcome distortion $\sum_j D_j$ from $-0.730$ to $+0.656$ nats,
a 1.386-nat gap. (The total can be negative because the continuous margin and
slip terms are Gaussian log-densities; the Bernoulli success term $D_{\suff}$
alone is nonnegative, and it rises by 0.28 nats under the same ablation.)
\emph{(iii) Marginalization.} One encoder sample costs 3.7 within-scene
points (0.602 vs.\ 0.639), the measured value of multi-sample
marginalization at selection time; it is not by itself evidence that the
spread is calibrated epistemic uncertainty.
\emph{(iv) Latent width.} $d=16$ matches $d=64$ (0.634 vs.\ 0.639) at a
quarter of the interface, consistent with a low intrinsic dimension of the
manipulation-relevant variation; $d=32$ scores lower (0.587) under a
single seed. Measured seed noise on the full configuration is
$\pm0.011$ within-scene, so the budget and constant-representation ablations (13-point
drops) exceed the full-model seed noise by more than tenfold, though the
ablation variants are single-seed so their own variance is not measured; the
$K{=}1$ drop (3.7 points) is comparable to a few full-model standard
deviations, and the latent-width
differences are at the noise floor. These data do not establish latent-width
saturation.
\emph{(v) Abstention.} On the independent synthetic oracle, acting only on singleton sets
raises success-when-acting from 0.935 to 0.949 while abstaining on 9 of
400 episodes. The effect is small in this synthetic setting. The separate scanned-corpus
selective experiment shows a larger difference between methods, but it is not
an action-level conformal guarantee.

\emph{Pooled AUC as a failed referee.} The pooled column would have passed
variants (i) and (ii) at 0.710 and 0.615 although both stopped
discriminating between a scene's grasps; every headline conclusion of this
paper would survive the substitution of pooled for within-scene AUC, but
the ablation story would silently invert. Metric choice is an
architectural decision.

\begin{figure}[t]
\centering
\includegraphics[width=\columnwidth]{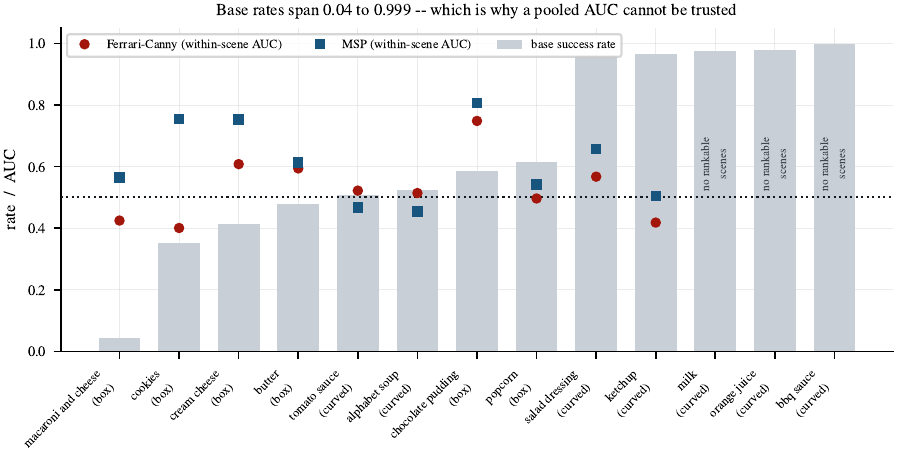}
\caption{Per-object base success rates (0.043 to 0.999) with pooled and
within-scene AUC\@. The spread is why a pooled AUC can be won by object
recognition alone, and why single-class or near-single-class scenes make
within-scene AUC undefined or high-variance for the extreme-base-rate
objects (Section~\ref{sec:loo}).}
\label{fig:perobject}
\end{figure}

\subsection{E9: Generalization to Unseen Objects}
\label{sec:loo}
\emph{Hypothesis.} Every experiment above trains and tests on the same 13
objects, so a within-distribution representation could in principle encode object
identity rather than transferable outcome structure. Transfer is tested with
13 models, each trained with one object held out and evaluated on that object.
The legacy all-pair filter is fit on
the remaining twelve objects and deployed on the held-out one, which provides
an empirical object-level shift test for the full-feedback update; it is not
covered by Proposition~\ref{prop:aci} or Theorem~\ref{thm:coverage}.

\begin{table}[t]
\centering
\caption{Leave-one-object-out transfer (Ours). Each row is a model trained
with the named object absent and evaluated only on that object's test
scenes; the legacy all-pair filter is calibrated on the other twelve objects
(nominal target $1-\alpha=0.90$). Within-scene AUC is averaged only over scenes containing
both labels. BBQ sauce has too few valid scenes for a stable estimate and is
reported as unavailable; the aggregate within-scene mean therefore excludes
that entry. Top-1 success and pairwise coverage remain well defined.}
\label{tab:loo}
\footnotesize
\setlength{\tabcolsep}{3pt}
\begin{tabular}{lccccc}
\toprule
Held-out object & base & Within & Top-1 & Cov.\ fix & Cov.\ ACI \\
\midrule
\multicolumn{6}{l}{\emph{Moderate base rate ($0.3<\text{base}<0.7$):
within-scene AUC is meaningful}} \\
butter & 0.48 & 0.569 & 0.474 & 0.703 & 0.895 \\
cream cheese & 0.41 & 0.546 & 0.429 & 0.543 & 0.896 \\
tomato sauce & 0.51 & 0.511 & 0.473 & 0.785 & 0.896 \\
alphabet soup & 0.52 & 0.479 & 0.500 & 0.774 & 0.897 \\
choc.\ pudding & 0.58 & \textbf{0.695} & 0.778 & 0.845 & 0.898 \\
popcorn & 0.62 & 0.475 & 0.601 & 0.666 & 0.893 \\
cookies & 0.35 & \textbf{0.709} & 0.491 & 0.805 & 0.899 \\
\emph{subgroup mean} & & \emph{0.569} & \emph{0.535} & \emph{0.732} &
\emph{0.896} \\
\midrule
\multicolumn{6}{l}{\emph{Extreme base rate: within-scene AUC degenerate}} \\
macaroni & 0.04 & 0.497 & 0.028 & 0.625 & 0.780 \\
BBQ sauce & 1.00 & -- & 0.993 & 0.752 & 0.890 \\
salad dress. & 0.96 & 0.520 & 0.975 & 0.704 & 0.877 \\
ketchup & 0.97 & 0.546 & 0.975 & 0.533 & 0.853 \\
milk & 0.98 & 0.528 & 0.984 & 0.913 & 0.907 \\
orange juice & 0.98 & 0.422 & 0.974 & 0.820 & 0.898 \\
\emph{subgroup mean} & & \emph{0.503} & \emph{0.822} & \emph{0.724} &
\emph{0.868} \\
\midrule
\textbf{All objects} & & 0.541 & 0.667 & 0.728 & \textbf{0.883} \\
\bottomrule
\end{tabular}
\end{table}

\emph{Data.} Table~\ref{tab:loo}. The results split by base rate. On the seven objects with
moderate base rates, where within-scene ranking is a well-posed question,
transfer degrades from the in-distribution 0.639 to 0.569 but stays above
chance, with two objects (cookies, chocolate pudding) at 0.70; the representation
carries grasp-ranking structure to geometry it has never seen, but less
than half of it survives the shift. On the six extreme-base-rate objects,
within-scene AUC is near-degenerate: with base rates from 0.96 to 0.999 (and
macaroni at 0.043) almost no scene contains both a success and a failure, so
the metric rests on very few valid scenes and is not a reliable ranking
measure. BBQ sauce is omitted because too few scenes contain both labels;
top-1 success is the meaningful quantity in that regime. Excluding that
unstable entry, the all-object within-scene mean is 0.541. The representation is partly
object-conditional, and the claim that the representation transfers intact
across object identity is not supported at this
scope; the relevant transfer estimate for the moderate-base-rate subset is
0.569.

\emph{Full-feedback adaptation improves empirical pairwise coverage.} The
reported quantity is a clustered all-pair diagnostic under simulated
object-level shift, not the
synthetic stream of Section~\ref{sec:coverage-results}. Calibrated on twelve
objects and deployed on a thirteenth, the fixed quantile undercovers
severely (mean 0.728, worst 0.533 on ketchup, dispersion 0.110), because the
held-out object's score distribution is genuinely out of calibration. The
anchored update raises it to 0.883 mean and compresses dispersion to
0.032, at a worst case of 0.780. This is the same mechanism as
Table~\ref{tab:drift} exercised with full outcome feedback. The update
substantially reduces empirical undercoverage but remains below the 0.90
nominal target on average and establishes neither finite-sample validity nor
selected-action or selective-feedback coverage.

\section{Discussion, Scope, and Limitations}
\label{sec:limitations}

\subsubsection{What the evidence licenses}
Within a version-pinned simulated world, the simulator supplies physical
success and slip while the analytic tier supplies the downweighted margin
label. The experiments support four statements. The
reconstruct-then-plan estimand fails where its geometry is wrong, to the
point of anti-correlation (E1, E2). The outcome-trained representation ranks
grasps near a privileged-state baseline from pixels and sustains a 0.984
selective success rate at the reported operating point (E1, E2). A combined
deterministic/no-KL variant collapses to chance-level within-scene ranking,
although that comparison does not isolate the responsible mechanism (E7).
The independent synthetic oracle tracks the split-conformal targets; the
scanned-corpus all-pair coverage is descriptive because of scene clustering
(E3, E5). The finite-probe identifiability theorem is verified on the
synthetic oracle. On scanned objects, the reported angles measure agreement
between an analytic surrogate and simulator
invariances (E4).

\subsubsection{What it does not license}
No claim transfers to hardware yet. The main missing tier is a
low-parameter residual of the outcome head fit on real executed grasps;
it is not included. Until it exists, the sufficiency motivation is relative
to the simulated task kernel. The
generalization limit is explicit (Section~\ref{sec:loo}): the representation
as trained is partly object-conditional, its grasp-ranking transfers to
unseen objects at 0.569 within-scene AUC against 0.639 in-distribution, and a
claim that the sufficiency statistic is object-agnostic is not supported at
this scope. Under full outcome feedback, the anchored update improves mean
pairwise coverage under simulated object-level shift from 0.728 to 0.883,
still below the 0.90 nominal target; finite-sample clustered, selected-action,
and selective-feedback guarantees remain open. The encoder distribution is a
diagonal Gaussian; richer mixture or flow families could represent multimodal
latent uncertainty, but the present discriminative objective does not identify
that uncertainty with a physical-state posterior. The encoder pools globally
to one code per scene, which suits the evaluated single-object scope but not cluttered
multi-object cells. Seed variance of the full model is measured at
$\pm0.011$ within-scene
AUC over three retrainings; ablation variants remain single-seed, so
cross-architecture deltas below roughly two standard deviations, including
the non-monotone latent-width sweep, remain unresolved.
External point-cloud grasp networks remain excluded by the controlled
protocol (Section~\ref{sec:protocol}); the internal point-estimate
baseline of Section~\ref{sec:baseline} is a controlled comparison, but it
changes both sampling and KL regularization and therefore does not isolate
either effect. A shared-controller comparison against published systems on
real hardware, with matched candidate sets and compute, remains necessary for
a deployment decision. Finally, the observed active-perception gain is small
because much of the residual variation comes from physical parameters not
observed by the camera. Closed-loop touch is the next test of whether the probe
update in~\eqref{eq:tta} can reduce the abstention rate in
Table~\ref{tab:coverage}.

\section{Conclusion}
\label{sec:conclusion}
This work changes the training target of perception for grasping. Instead of
an accurate geometric state, an edge perception node produces a stochastic,
outcome-trained representation for the action-conditioned outcome family.
An ideal constrained information program characterizes minimal sufficiency;
the implemented model optimizes a rate-regularized marginal predictor and
transmits a measured 512-byte fp32 payload. The finite-probe theory identifies
local level-set directions for a fixed probe family, and split conformal
supplies marginal pairwise coverage when one randomized action is taken from
each exchangeable scene. Empirically, on 11{,}979 executed
simulated grasps over 13 scanned objects, the standard analytic score on
reconstructed geometry predicts lift outcomes at 0.542 AUC and falls below
chance on curved stock, while the 128-scalar representation reaches 0.876 and
sustains 0.984 success at a 25\% commitment rate at 16\,ms of CPU compute per
decision. Independent synthetic points track the tested conformal levels; the
scanned-corpus all-pair coverage remains a clustered empirical diagnostic
pending scene-level recalibration. Transfer to unseen objects is partial: moderate-base
grasp ranking degrades to 0.569 within-scene AUC, while a full-feedback
adaptive update improves mean pairwise coverage from 0.728 to 0.883 under
object shift without establishing a selective-feedback guarantee. For
networked manipulation, the result motivates shifting the design question
from how accurately geometry can be shipped to how compactly action-relevant
outcome information can be represented and empirically validated.

\bibliographystyle{IEEEtran}
\bibliography{refs}

@article{shi2016edge,
  author  = {Shi, Weisong and Cao, Jie and Zhang, Quan and Li, Youhuizi and Xu, Lanyu},
  title   = {Edge Computing: Vision and Challenges},
  journal = {IEEE Internet of Things Journal},
  volume  = {3},
  number  = {5},
  pages   = {637--646},
  year    = {2016}
}

@article{mao2017survey,
  author  = {Mao, Yuyi and You, Changsheng and Zhang, Jun and Huang, Kaibin and Letaief, Khaled B.},
  title   = {A Survey on Mobile Edge Computing: The Communication Perspective},
  journal = {IEEE Communications Surveys \& Tutorials},
  volume  = {19},
  number  = {4},
  pages   = {2322--2358},
  year    = {2017}
}

@article{kehoe2015survey,
  author  = {Kehoe, Ben and Patil, Sachin and Abbeel, Pieter and Goldberg, Ken},
  title   = {A Survey of Research on Cloud Robotics and Automation},
  journal = {IEEE Transactions on Automation Science and Engineering},
  volume  = {12},
  number  = {2},
  pages   = {398--409},
  year    = {2015}
}

@inproceedings{wen2024foundationpose,
  author    = {Wen, Bowen and Yang, Wei and Kautz, Jan and Birchfield, Stan},
  title     = {{FoundationPose}: Unified {6D} Pose Estimation and Tracking of Novel Objects},
  booktitle = {Proc. IEEE/CVF Conf. Computer Vision and Pattern Recognition (CVPR)},
  year      = {2024}
}

@inproceedings{wang2019nocs,
  author    = {Wang, He and Sridhar, Srinath and Huang, Jingwei and Valentin, Julien and Song, Shuran and Guibas, Leonidas J.},
  title     = {Normalized Object Coordinate Space for Category-Level {6D} Object Pose and Size Estimation},
  booktitle = {Proc. IEEE/CVF Conf. Computer Vision and Pattern Recognition (CVPR)},
  year      = {2019}
}

@inproceedings{xiang2018posecnn,
  author    = {Xiang, Yu and Schmidt, Tanner and Narayanan, Venkatraman and Fox, Dieter},
  title     = {{PoseCNN}: A Convolutional Neural Network for {6D} Object Pose Estimation in Cluttered Scenes},
  booktitle = {Proc. Robotics: Science and Systems (RSS)},
  year      = {2018}
}

@inproceedings{shen2023f3rm,
  author    = {Shen, William and Yang, Ge and Yu, Alan and Wong, Jansen and Kaelbling, Leslie P. and Isola, Phillip},
  title     = {Distilled Feature Fields Enable Few-Shot Language-Guided Manipulation},
  booktitle = {Proc. Conf. Robot Learning (CoRL)},
  year      = {2023}
}

@inproceedings{rashid2023lerftogo,
  author    = {Rashid, Adam and Sharma, Satvik and Kim, Chung Min and Kerr, Justin and Chen, Lawrence Yunliang and Kanazawa, Angjoo and Goldberg, Ken},
  title     = {Language Embedded Radiance Fields for Zero-Shot Task-Oriented Grasping},
  booktitle = {Proc. Conf. Robot Learning (CoRL)},
  year      = {2023}
}

@article{fang2023anygrasp,
  author  = {Fang, Hao-Shu and Wang, Chenxi and Fang, Hongjie and Gou, Minghao and Liu, Jirong and Yan, Hengxu and Liu, Wenhai and Xie, Yichen and Lu, Cewu},
  title   = {{AnyGrasp}: Robust and Efficient Grasp Perception in Spatial and Temporal Domains},
  journal = {IEEE Transactions on Robotics},
  volume  = {39},
  number  = {5},
  pages   = {3929--3945},
  year    = {2023}
}

@inproceedings{fang2020graspnet,
  author    = {Fang, Hao-Shu and Wang, Chenxi and Gou, Minghao and Lu, Cewu},
  title     = {{GraspNet-1Billion}: A Large-Scale Benchmark for General Object Grasping},
  booktitle = {Proc. IEEE/CVF Conf. Computer Vision and Pattern Recognition (CVPR)},
  year      = {2020}
}

@inproceedings{sundermeyer2021contact,
  author    = {Sundermeyer, Martin and Mousavian, Arsalan and Triebel, Rudolph and Fox, Dieter},
  title     = {Contact-{GraspNet}: Efficient {6-DoF} Grasp Generation in Cluttered Scenes},
  booktitle = {Proc. IEEE Int. Conf. Robotics and Automation (ICRA)},
  year      = {2021}
}

@inproceedings{mahler2017dexnet,
  author    = {Mahler, Jeffrey and Liang, Jacky and Niyaz, Sherdil and Laskey, Michael and Doan, Richard and Liu, Xinyu and Ojea, Juan Aparicio and Goldberg, Ken},
  title     = {Dex-{Net} 2.0: Deep Learning to Plan Robust Grasps with Synthetic Point Clouds and Analytic Grasp Metrics},
  booktitle = {Proc. Robotics: Science and Systems (RSS)},
  year      = {2017}
}

@article{newbury2023deep,
  author  = {Newbury, Rhys and Gu, Morris and Chumbley, Lachlan and Mousavian, Arsalan and Eppner, Clemens and Leitner, J{\"u}rgen and Bohg, Jeannette and Morales, Antonio and Asfour, Tamim and Kragic, Danica and Fox, Dieter and Cosgun, Akansel},
  title   = {Deep Learning Approaches to Grasp Synthesis: A Review},
  journal = {IEEE Transactions on Robotics},
  volume  = {39},
  number  = {5},
  pages   = {3994--4015},
  year    = {2023}
}

@inproceedings{ferrari1992planning,
  author    = {Ferrari, Carlo and Canny, John},
  title     = {Planning Optimal Grasps},
  booktitle = {Proc. IEEE Int. Conf. Robotics and Automation (ICRA)},
  pages     = {2290--2295},
  year      = {1992}
}

@article{roa2015grasp,
  author  = {Roa, M{\'a}ximo A. and Su{\'a}rez, Ra{\'u}l},
  title   = {Grasp Quality Measures: Review and Performance},
  journal = {Autonomous Robots},
  volume  = {38},
  number  = {1},
  pages   = {65--88},
  year    = {2015}
}

@inproceedings{hodan2018bop,
  author    = {Hoda{\v n}, Tom{\'a}{\v s} and Michel, Frank and Brachmann, Eric and Kehl, Wadim and Buch, Anders Glent and Kraft, Dirk and Drost, Bertram and Vidal, Joel and Ihrke, Stephan and Zabulis, Xenophon and Sahin, Caner and Manhardt, Fabian and Tombari, Federico and Kim, Tae-Kyun and Matas, Ji{\v r}{\'i} and Rother, Carsten},
  title     = {{BOP}: Benchmark for {6D} Object Pose Estimation},
  booktitle = {Proc. European Conf. Computer Vision (ECCV)},
  year      = {2018}
}

@inproceedings{tishby1999,
  author    = {Tishby, Naftali and Pereira, Fernando C. and Bialek, William},
  title     = {The Information Bottleneck Method},
  booktitle = {Proc. 37th Annual Allerton Conf. Communication, Control, and Computing},
  pages     = {368--377},
  year      = {1999}
}

@inproceedings{alemi2017,
  author    = {Alemi, Alexander A. and Fischer, Ian and Dillon, Joshua V. and Murphy, Kevin},
  title     = {Deep Variational Information Bottleneck},
  booktitle = {Proc. Int. Conf. Learning Representations (ICLR)},
  year      = {2017}
}

@inproceedings{pacelli2020,
  author    = {Pacelli, Vincent and Majumdar, Anirudha},
  title     = {Learning Task-Driven Control Policies via Information Bottlenecks},
  booktitle = {Proc. Robotics: Science and Systems (RSS)},
  year      = {2020}
}

@article{huang2024rekep,
  author  = {Huang, Wenlong and Wang, Chen and Li, Yunzhu and Zhang, Ruohan and Fei-Fei, Li},
  title   = {{ReKep}: Spatio-Temporal Reasoning of Relational Keypoint Constraints for Robotic Manipulation},
  journal = {arXiv preprint arXiv:2409.01652},
  year    = {2024}
}

@inproceedings{kim2024openvla,
  author    = {Kim, Moo Jin and Pertsch, Karl and Karamcheti, Siddharth and Xiao, Ted and Balakrishna, Ashwin and Nair, Suraj and Rafailov, Rafael and Foster, Ethan and Lam, Grace and Sanketi, Pannag and Vuong, Quan and Kollar, Thomas and Burchfiel, Benjamin and Tedrake, Russ and Sadigh, Dorsa and Levine, Sergey and Liang, Percy and Finn, Chelsea},
  title     = {{OpenVLA}: An Open-Source Vision-Language-Action Model},
  booktitle = {Proc. Conf. Robot Learning (CoRL)},
  year      = {2024}
}

@book{vovk2005,
  author    = {Vovk, Vladimir and Gammerman, Alexander and Shafer, Glenn},
  title     = {Algorithmic Learning in a Random World},
  publisher = {Springer},
  address   = {New York, NY, USA},
  year      = {2005}
}

@article{angelopoulos2023,
  author  = {Angelopoulos, Anastasios N. and Bates, Stephen},
  title   = {Conformal Prediction: A Gentle Introduction},
  journal = {Foundations and Trends in Machine Learning},
  volume  = {16},
  number  = {4},
  pages   = {494--591},
  year    = {2023}
}

@article{lindemann2023safe,
  author  = {Lindemann, Lars and Cleaveland, Matthew and Shim, Gihyun and Pappas, George J.},
  title   = {Safe Planning in Dynamic Environments Using Conformal Prediction},
  journal = {IEEE Robotics and Automation Letters},
  volume  = {8},
  number  = {8},
  pages   = {5116--5123},
  year    = {2023}
}

@inproceedings{ren2023knowno,
  author    = {Ren, Allen Z. and Dixit, Anushri and Bodrova, Alexandra and Singh, Sumeet and Tu, Stephen and Brown, Noah and Xu, Peng and Takayama, Leila and Xia, Fei and Varley, Jake and Xu, Zhenjia and Sadigh, Dorsa and Zeng, Andy and Majumdar, Anirudha},
  title     = {Robots That Ask for Help: Uncertainty Alignment for Large Language Model Planners},
  booktitle = {Proc. Conf. Robot Learning (CoRL)},
  year      = {2023}
}

@inproceedings{lakshminarayanan2017,
  author    = {Lakshminarayanan, Balaji and Pritzel, Alexander and Blundell, Charles},
  title     = {Simple and Scalable Predictive Uncertainty Estimation Using Deep Ensembles},
  booktitle = {Proc. Advances in Neural Information Processing Systems (NeurIPS)},
  year      = {2017}
}

@inproceedings{gibbs2021adaptive,
  author    = {Gibbs, Isaac and Cand{\`e}s, Emmanuel},
  title     = {Adaptive Conformal Inference Under Distribution Shift},
  booktitle = {Proc. Advances in Neural Information Processing Systems (NeurIPS)},
  year      = {2021}
}

@article{strinati2021,
  author  = {Calvanese Strinati, Emilio and Barbarossa, Sergio},
  title   = {{6G} Networks: Beyond {Shannon} Towards Semantic and Goal-Oriented Communications},
  journal = {Computer Networks},
  volume  = {190},
  pages   = {107930},
  year    = {2021}
}

@article{gunduz2023beyond,
  author  = {G{\"u}nd{\"u}z, Deniz and Qin, Zhijin and Estella Aguerri, I{\~n}aki and Dhillon, Harpreet S. and Yang, Zhaohui and Yener, Aylin and Wong, Kai-Kit and Chae, Chan-Byoung},
  title   = {Beyond Transmitting Bits: Context, Semantics, and Task-Oriented Communications},
  journal = {IEEE Journal on Selected Areas in Communications},
  volume  = {41},
  number  = {1},
  pages   = {5--41},
  year    = {2023}
}

@article{matsubara2022split,
  author  = {Matsubara, Yoshitomo and Levorato, Marco and Restuccia, Francesco},
  title   = {Split Computing and Early Exiting for Deep Learning Applications: Survey and Research Challenges},
  journal = {ACM Computing Surveys},
  volume  = {55},
  number  = {5},
  pages   = {1--30},
  year    = {2022}
}

@book{cover2006,
  author    = {Cover, Thomas M. and Thomas, Joy A.},
  title     = {Elements of Information Theory},
  edition   = {2nd},
  publisher = {Wiley},
  address   = {Hoboken, NJ, USA},
  year      = {2006}
}

@inproceedings{havasi2019,
  author    = {Havasi, Marton and Peharz, Robert and Hern{\'a}ndez-Lobato, Jos{\'e} Miguel},
  title     = {Minimal Random Code Learning: Getting Bits Back from Compressed Model Parameters},
  booktitle = {Proc. Int. Conf. Learning Representations (ICLR)},
  year      = {2019}
}

@inproceedings{kingma2014,
  author    = {Kingma, Diederik P. and Welling, Max},
  title     = {Auto-Encoding Variational {Bayes}},
  booktitle = {Proc. Int. Conf. Learning Representations (ICLR)},
  year      = {2014}
}

@inproceedings{burda2016,
  author    = {Burda, Yuri and Grosse, Roger and Salakhutdinov, Ruslan},
  title     = {Importance Weighted Autoencoders},
  booktitle = {Proc. Int. Conf. Learning Representations (ICLR)},
  year      = {2016}
}

@book{lehmann1998,
  author    = {Lehmann, Erich L. and Casella, George},
  title     = {Theory of Point Estimation},
  edition   = {2nd},
  publisher = {Springer},
  address   = {New York, NY, USA},
  year      = {1998}
}

@book{lee2012,
  author    = {Lee, John M.},
  title     = {Introduction to Smooth Manifolds},
  edition   = {2nd},
  publisher = {Springer},
  address   = {New York, NY, USA},
  year      = {2012}
}

@inproceedings{todorov2012,
  author    = {Todorov, Emanuel and Erez, Tom and Tassa, Yuval},
  title     = {{MuJoCo}: A Physics Engine for Model-Based Control},
  booktitle = {Proc. IEEE/RSJ Int. Conf. Intelligent Robots and Systems (IROS)},
  pages     = {5026--5033},
  year      = {2012}
}

@inproceedings{liu2023libero,
  author    = {Liu, Bo and Zhu, Yifeng and Gao, Chongkai and Feng, Yihao and Liu, Qiang and Zhu, Yuke and Stone, Peter},
  title     = {{LIBERO}: Benchmarking Knowledge Transfer for Lifelong Robot Learning},
  booktitle = {Proc. Advances in Neural Information Processing Systems (NeurIPS), Datasets and Benchmarks Track},
  year      = {2023}
}

@inproceedings{he2016,
  author    = {He, Kaiming and Zhang, Xiangyu and Ren, Shaoqing and Sun, Jian},
  title     = {Deep Residual Learning for Image Recognition},
  booktitle = {Proc. IEEE Conf. Computer Vision and Pattern Recognition (CVPR)},
  pages     = {770--778},
  year      = {2016}
}

\end{document}